\documentclass{article}

\usepackage[preprint]{neurips_2026}

\usepackage[utf8]{inputenc} % allow utf-8 input
\usepackage[T1]{fontenc}    % use 8-bit T1 fonts
\usepackage{url}            % simple URL typesetting
\usepackage{booktabs}       % professional-quality tables
\usepackage{amsfonts}       % blackboard math symbols
\usepackage{nicefrac}       % compact symbols for 1/2, etc.
\usepackage{microtype}      % microtypography
\usepackage{xcolor}         % colors
\usepackage{amsmath}
\usepackage{enumitem}
\usepackage{algorithm}
\usepackage{algorithmic}
\definecolor{citeblue}{RGB}{0,102,204}
\definecolor{refred}{RGB}{216, 81, 64}
\usepackage{amsmath,amssymb,amsthm}

\newtheorem{lemma}{Lemma}

\usepackage{graphicx}
\usepackage{subcaption}
\usepackage{multirow}
\usepackage{float}

\usepackage[colorlinks=true,
            linkcolor=refred,
            citecolor=citeblue,
            urlcolor=blue]{hyperref}

\title{Low-Rank Adaptation for Critic Learning in Off-Policy Reinforcement Learning}

\author{%
  Yuan Zhuang\textsuperscript{1}\footnotemark[1]
  \quad
  Yuexin Bian\textsuperscript{2}\footnotemark[1]
  \quad
  Sihong He\textsuperscript{3}
  \quad
  Jie Feng\textsuperscript{2}
  \quad
  Qing Su\textsuperscript{1}
  \\
  Songyang Han
  \quad
  Jonathan Petit\textsuperscript{4}
  \quad
  Shihao Ji\textsuperscript{1}
  \quad
  Yuanyuan Shi\textsuperscript{2}
  \quad
  Fei Miao\textsuperscript{1}\thanks{Corresponding author: Fei Miao}
  \\[0.8em]
  \textsuperscript{1}University of Connecticut
  \quad
  \textsuperscript{2}University of California San Diego
  \\
  \textsuperscript{3}University of Texas at Arlington
  \quad
  \textsuperscript{4}Qualcomm
  \\
  \textit{$^*$Equal Contribution}
}

\begin{document}

\maketitle

\begin{abstract}
Scaling critic capacity is a promising direction for improving off-policy reinforcement learning (RL). However, recent work shows that larger critics are prone to overfitting and instability in replay-based bootstrapped training. In this paper, we propose using Low-Rank Adaptation (LoRA) as a structural regularizer for critic learning. Our approach freezes randomly initialized base matrices and optimizes only the corresponding low-rank adapters, thereby constraining critic updates to a low-dimensional subspace. We evaluate our method across different off-policy RL algorithms, including SAC and FastTD3 based on different network architectures. Empirically, LoRA efficiently reduces critic loss during training and improves overall policy performance, achieving the best or competitive results on most tasks. Extensive experiments demonstrate that our low-rank updates provide a simple and effective form of structural regularization for critic learning in off-policy RL. The code is available at: \url{https://github.com/paulzyzy/LoRA_RL}.

\end{abstract}

\section{Introduction}

Off-policy reinforcement learning (RL) has become a leading paradigm for continuous control because it can reuse past experience through replay, substantially improving sample efficiency over on-policy methods \cite{haarnoja2018soft,fujimoto2018addressing}. At the core of these methods is the \emph{critic}, which estimates action values and provides the learning signal for policy improvement \cite{fujimoto2018addressing}. Since both policy evaluation and policy improvement depend on the quality of the learned value function, critic estimation plays a central role in the stability, scalability, and final performance of off-policy RL~\cite{farebrother2024stop}.

Motivated by this central role, a growing body of work has sought to improve off-policy RL primarily through better \emph{critic design}. Early work focused on reducing estimation bias. Double Q-learning was introduced to mitigate the overestimation induced by the max operator by decoupling action selection from action evaluation \cite{hasselt2010double}, and this idea later became a standard ingredient in deep RL critics \cite{van2016deep}. More recent works have revisited the critic architecture itself. CrossQ showed that carefully incorporating Batch Normalization can significantly improve sample efficiency in off-policy actor-critic learning \cite{bhatt2024crossq}. SimBa proposed an RL backbone with normalization and residual design choices that enable more effective scaling of network capacity \cite{leesimba}. Building on this line, SimbaV2 introduced hyperspherical normalization together with distributional value estimation to stabilize optimization in larger RL critics \cite{lee2025hyperspherical}. Another line of work revisited the critic learning objective. Distributional RL models the full return distribution rather than only its expectation, leading to more informative and stable value learning \cite{bellemare2017distributional}. Building on this perspective,  Farebrother et al.~\cite{farebrother2024stop} showed that replacing value regression with classification can substantially improve the scalability of value learning. 

More recently, several works have highlighted the promise of scaling in continuous-control RL~\cite{palenicek2025scaling, hansen2024tdmpc,leesimba, nauman2024bigger}. TD-MPC2 demonstrated that increasing model size can substantially improve robustness and performance in continuous control \cite{hansen2024tdmpc}. BRO further showed that larger critics can be highly effective when paired with strong regularization and optimistic exploration \cite{nauman2024bigger}. At the same time, recent studies suggested that simply scaling dense RL networks is often insufficient. Although larger critics can improve expressiveness and have contributed to the strong performance of recent RL methods \cite{lee2025hyperspherical}, off-policy critic learning is particularly vulnerable to failure under scale because it relies on bootstrapped targets learned from non-stationary replay data. As a result, larger dense critics can overfit, become harder to optimize, and fail to realize the full benefits of increased capacity \cite{ma2025sparsity}. These findings indicate that successful critic scaling requires not only larger models, but also appropriate structural regularization.

In this work, we follow this line of thinking and propose an adaptive form of structured regularization for off-policy critics. 
Our key idea is to use Low-Rank Adaptation (LoRA)~\cite{hu2022lora} not as a parameter-efficient fine-tuning method for pretrained models, but as a structural regularizer for critic learning from scratch. Specifically, we freeze the randomly initialized dense critic backbone and optimize only low-rank residual updates to its weights. This preserves a dense forward representation through the fixed random backbone, while restricting learning to a low-dimensional update subspace. 
In this sense, our method is closely related to sparsity-based regularization for RL~\cite{ma2025sparsity}. While prior work regularizes the critic by statically setting a subset of weights to zero and inducing sparsity in the forward computation, our method keeps the critic network dense and regularizes learning by restricting weight updates to a low-dimensional subspace.
We instantiate this idea in two commonly adopted architectures, SimbaV2~\cite{lee2025hyperspherical} and BRC~\cite{nauman2025brc}. For SimbaV2, where hyperspherical weight normalization makes direct LoRA integration nontrivial, we develop a compatible LoRA design that preserves the frozen base matrices while updating only the low-rank adapters, ensuring that the resulting effective weights remain on the hyperspherical geometry.

We summarize our main contributions as follows:
\begin{itemize}[leftmargin=1.5em,itemsep=2pt,topsep=2pt]
    \item \textbf{LoRA as structured sparse regularization for critic learning.}\;
    We propose leveraging LoRA as a structural regularizer for off-policy critic learning, where randomly initialized base matrices are frozen and only the corresponding low-rank adapters are optimized. By constraining critic updates to a low-dimensional subspace, our approach mitigates critic overfitting and improves performance on challenging RL tasks. To the best of our knowledge, this is the first systematic study of LoRA as a regularization mechanism for critic learning in off-policy RL.
    % \item \textbf{Hyperspherical weight normalization for LoRA.}\;
    % We develop a LoRA formulation for critic learning that is compatible with SimbaV2's hyperspherical normalization. This is not directly supported by standard LoRA, since applying hyperspherical normalization to the effective weight would also rescale the frozen base matrix. Our design enables frozen-backbone training while preserving the intended geometric structure of the critic weights.
    \item \textbf{Generality across algorithms and architectures.}\;
Extensive experiments demonstrate that the proposed LoRA regularization principle generalizes across multiple off-policy RL settings. Specifically, we consider three representative settings: SAC with SimbaV2~\cite{lee2025hyperspherical} on DMC-Hard tasks, SAC with BRC~\cite{nauman2025brc} on DMC-Hard tasks, and FastTD3 with SimbaV2~\cite{seo2025fasttd3} on IsaacLab robotics tasks. Across these algorithms, architectures, and benchmarks, LoRA achieves the best or competitive results on most tasks.
\end{itemize}

\section{Related Work}

\paragraph{Scaling Model Capacity in Reinforcement Learning.}
While larger models consistently improve supervised and self-supervised learning~\cite{kaplan2020scaling}, scaling in RL has proven far more delicate~\cite{andrychowicz2020matters,bian2026rn}. Naively widening or deepening value-function networks often degrades performance due to overfitting, plasticity loss, and optimization instability~\cite{kumar2021implicit,lyle2022understanding,nikishin2022primacy}. As a result, several recent works have explored architectures and training strategies that make scaling more effective in RL. SpectralNet~\cite{bjorck2021towards} uses spectral normalization to stabilize large critics during training. SimBa~\cite{leesimba} introduces a simplicity-biased residual architecture that enables effective parameter scaling across RL algorithms. BRO~\cite{nauman2024bigger} combines strong $\ell_2$ regularization, optimistic exploration, and high replay ratios to improve continuous-control performance at larger critic scales. SimbaV2~\cite{lee2025hyperspherical} further improves scalability through hyperspherical normalization of features and weights together with distributional critics. FastTD3~\cite{seo2025fasttd3} shows that parallel simulation and large-batch training can make scaled TD3-style agents competitive on challenging humanoid tasks. However, these advances do not fully resolve the overfitting and instability of critic learning under replay-based training. This has motivated a complementary line of work on regularization techniques for critic learning.

\paragraph{Regularization Techniques for Critic Learning.}
In supervised learning, overfitting is commonly mitigated through standard techniques such as weight decay~\cite{krogh1991simple} and dropout~\cite{srivastava2014dropout}. A related perspective comes from parameter-efficient fine-tuning, where constraining optimization to a sparse or low-dimensional subset of parameters can act as an implicit regularizer and often match or even outperform full fine-tuning~\cite{fu2023effectiveness,hu2022lora}. Critic learning in RL shares similarities with supervised learning, but is further complicated by distribution shift arising from replay data and bootstrapped targets. Prior work has therefore explored RL-specific approaches to controlling overfitting. DroQ improves critic stability at high update-to-data ratios through dropout-based Q-functions~\cite{hiraoka2021dropout}. More recently, Ma et al.~\cite{ma2025sparsity} show that static network sparsity induced by one-shot random pruning can mitigate optimization pathologies such as plasticity loss and gradient interference in large RL networks, thereby helping unlock the scaling potential of deep RL. Our work is most closely related to this line of research, but instead of imposing a fixed sparse structure, we improve critic robustness through adaptive low-rank updates that regularize how the critic can change during training. We further show that this design yields lower critic loss during training and consistently improves RL performance.

\section{Preliminaries}
\label{sec:preliminaries}

We first review the off-policy RL algorithms and neural network architecture backbones used in our study. In particular, we consider Soft Actor-Critic (SAC)~\cite{haarnoja2018soft} and FastTD3~\cite{seo2025fasttd3} as representative off-policy algorithms, together with recent state-of-art backbones including SimbaV2~\cite{lee2025hyperspherical} and BRC~\cite{nauman2025brc}.
%Our experiments instantiate this template in three settings:SAC with SimbaV2 critics, FastTD3 with a SimbaV2-style critic backbone, and SAC+BRC with BroNet critics.

\subsection{Off-Policy Actor-Critic Algorithms}
\label{sec:algorithms}

\paragraph{Soft Actor-Critic.}
SAC~\cite{haarnoja2018soft} learns a stochastic policy
$\pi_\theta(a\mid s)$ under the maximum-entropy objective
\begin{equation}
J(\pi)=
\mathbb{E}_{\tau\sim\pi}\!
\left[\sum_{t\ge 0}\gamma^t
\bigl(r(s_t,a_t)+\alpha_{\mathrm{ent}}
\mathcal{H}(\pi(\cdot\mid s_t))\bigr)\right],
\label{eq:maxent_obj}
\end{equation}
where $\alpha_{\mathrm{ent}}$ is the entropy temperature. In scalar notation,
the soft Bellman target is
\begin{equation}
y_{\mathrm{SAC}}
=
r+\gamma^n m
\left(
Q_{\bar\phi}^{\mathrm{agg}}(s',a')
-\alpha_{\mathrm{ent}}\log\pi_\theta(a'\mid s')
\right),
\qquad
a'\sim\pi_\theta(\cdot\mid s').
\label{eq:sac_target}
\end{equation}
The aggregation $Q^{\mathrm{agg}}$ depends on the critic design: SimbaV2
uses either a single critic or clipped double Q-learning depending on the
environment setting, while BRC uses an ensemble average. The SAC actor
minimizes
\begin{equation}
\mathcal{L}_{\pi}(\theta)=
\mathbb{E}_{s\sim\mathcal{D},\,a\sim\pi_\theta}
\left[
\alpha_{\mathrm{ent}}\log\pi_\theta(a\mid s)
-
Q_{\phi}^{\mathrm{agg}}(s,a)
\right].
\label{eq:sac_actor_loss}
\end{equation}
Target critics are updated by Polyak averaging, where $\phi$ denotes the online critic parameters, $\bar{\phi}$ denotes the target critic parameters, and $\tau \in (0,1]$ is the target update rate.
\begin{equation}
\bar\phi \leftarrow (1-\tau)\bar\phi+\tau\phi .
\label{eq:target_ema}
\end{equation}

\paragraph{FastTD3.}
FastTD3~\cite{seo2025fasttd3} is a high-throughput variant of TD3
\cite{fujimoto2018addressing} that combines massively parallel simulation,
large-batch updates, target-policy smoothing, and distributional critics.
It uses a deterministic policy $\mu_\theta$ and constructs the target action
\begin{equation}
\tilde a'
=
\mathrm{clip}\!\left(
\mu_{\bar\theta}(s')
+
\mathrm{clip}(\epsilon,-c,c),
\,a_{\min},a_{\max}
\right),
\qquad
\epsilon\sim\mathcal{N}(0,\sigma^2 I).
\end{equation}
The scalar target is
\begin{equation}
y_{\mathrm{TD3}}
=
r+\gamma^n m\, Q_{\bar\phi}^{\mathrm{agg}}(s',\tilde a').
\label{eq:td3_target}
\end{equation}
The actor minimizes $-\mathbb{E}_{s\sim\mathcal{D}}[
Q_{\phi}^{\mathrm{agg}}(s,\mu_\theta(s))]$ with delayed policy updates.

\subsection{Categorical Critic Learning}
Categorical value learning has recently been widely adopted to improve critic learning in off-policy reinforcement learning~\cite{bellemare2017distributional,farebrother2024stop}. Both SimbaV2 and BRC build on this idea by using categorical value learning for their critics. Instead of directly outputting a scalar value, the critic predicts a categorical distribution over a fixed value support. The scalar $Q$-value can then be recovered as the expectation of this distribution, while the critic is trained using a cross-entropy loss between the predicted and target value distributions.

Let $\mathcal{Z}=\{z_i\}_{i=1}^{N}$ be a fixed discrete value support with $N$ atoms, where
$
z_i=v_{\min}+\frac{i-1}{N-1}(v_{\max}-v_{\min}).
$
A categorical critic predicts a distribution over this support rather than a scalar value. Specifically, for a state-action pair $(s,a)$, it outputs logits $f_\phi(s,a)\in\mathbb{R}^{N}$ and probabilities $p_\phi(s,a)=\mathrm{softmax}(f_\phi(s,a))$. The scalar value estimate is obtained as the expectation
$
Q_\phi(s,a)=\sum_{i=1}^{N} z_i\,p_{\phi,i}(s,a).
$
The target distribution is constructed by applying the Bellman backup to the categorical distribution predicted by the target critic and projecting the shifted distribution back onto $\mathcal{Z}$ using the C51 projection~\cite{bellemare2017distributional}. 
Let $\hat p(s,a)$ denote the projected target distribution. The critic is trained with the cross-entropy loss
$
\mathcal{L}_{Q}(\phi)
=
-\mathbb{E}_{(s,a)\sim\mathcal{D}}
\sum_{i=1}^{N}
\hat p_i(s,a)\log p_{\phi,i}(s,a).
$
%For SAC-style critics, the shifted atom is\(r+\gamma^n m(z_i-\alpha_{\mathrm{ent}}\log\pi_\theta(a'\mid s'))\), while for FastTD3 it is \(r+\gamma^n m z_i\). 
%For BRC, target probabilities are averaged across the critic ensemble before projection. 

\subsection{SimbaV2}
\label{sec:simbav2}

SimbaV2~\cite{lee2025hyperspherical} stabilizes large off-policy critics by
combining hyperspherical feature normalization, post-update weight projection,
LERP residual blocks, and categorical value learning. Given a normalized observation-action input $\bar x_t$,
SimbaV2 appends a constant coordinate and embeds
\begin{equation}
\tilde x_t=\ell_2\text{-Norm}([\bar x_t;c_{\mathrm{shift}}]),
\qquad
h_t^0=\ell_2\text{-Norm}(s_h^0\odot W_h^0\tilde x_t).
\label{eq:simbav2_input}
\end{equation}
Each residual block applies an inverted bottleneck MLP followed by a learned
linear interpolation:
\begin{equation}
\tilde h_t^\ell
=
\ell_2\text{-Norm}\!\left(
W_{h,2}^\ell
\left[
\mathrm{ReLU}\!\left(s_h^\ell\odot W_{h,1}^\ell h_t^\ell\right)
\right]\right),
\qquad
h_t^{\ell+1}
=
\ell_2\text{-Norm}\!\left((\mathbf{1}-\beta^\ell)
h_t^\ell+\beta^\ell\odot\tilde h_t^\ell
\right).
\label{eq:simbav2_block}
\end{equation}
Here $\mathbf{1}\in\mathbb{R}^{d_h}$ denotes the all-ones vector,
$\beta^\ell\in\mathbb{R}^{d_h}$ is a learnable interpolation vector,
$W_{h,1}^\ell\in\mathbb{R}^{4d_h\times d_h}$,
$W_{h,2}^\ell\in\mathbb{R}^{d_h\times 4d_h}$, and
$s_h^\ell\in\mathbb{R}^{4d_h}$ is a learnable scaling vector. The critic head
maps $h_t^L$ to categorical logits.
Each update is followed by a hyperspherical projection of the weight vectors:
\begin{equation}
W\leftarrow \ell_2\text{-Norm}\!\left(W-\eta\nabla_W\mathcal{L}\right).
\label{eq:simbav2_weight_proj}
\end{equation}
% For FastTD3 with SimbaV2, all the SimbaV2 architectural
% components are kept but this post-update weight projection is not applied.

\subsection{BRC}
\label{sec:brc_prelim}
BRC~\cite{nauman2025brc} combines a high-capacity BroNet~\cite{nauman2024bigger} architecture with categorical cross-entropy value learning. Unlike clipped double Q-learning, which uses the minimum of two critics to form the target, BRC maintains two critics and uses their average as the critic estimate. Its critic backbone is a residual MLP with layer normalization $\mathrm{LN}$ and ReLU activations:
\begin{equation}
h^0=\mathrm{ReLU}(\mathrm{LN}(W_{\mathrm{in}}x)),
\qquad
h^{\ell+1}
=
h^\ell+
\mathrm{LN}\!\left(
W_2^\ell\,
\mathrm{ReLU}(\mathrm{LN}(W_1^\ell h^\ell))
\right).
\label{eq:bronet}
\end{equation}
The final linear layer maps the last hidden representation to categorical logits over the value support.

\section{Motivating Example: Low-Rank Updates under Off-Policy Bootstrapping}
\label{sec:motivating_example}

To illustrate when low-rank critic updates can help in off-policy learning, we construct a minimal example that captures three key ingredients of practical critic training: bootstrapped targets, replay-induced distribution shift, and overparameterized value approximation. The example contrasts low-rank adaptation with full-parameter critic learning, showing how restricting the update space can regularize bootstrapped value learning under nonstationary data. Full details are provided in Appendix~\ref{app:toy_example}. Figure~\ref{fig:toy_combined} shows a clear contrast between two training regimes. Under \emph{static regression}, where the critic is trained directly on the true target values, the dense parameterization achieves the lowest final error, indicating that low-rank updates mainly act as a capacity bottleneck in this supervised setting. Under \emph{bootstrapped off-policy TD}, however, the trend reverses: small-rank LoRA attains lower final true-$Q$ error than the dense baseline, with the best result achieved at \(r=1\). Small-rank LoRA also settles to a lower late-stage Bellman-residual plateau. These results suggest that the benefit of low-rank updates in off-policy RL is not reduced model capacity; rather, under bootstrapping and distribution shift, the low-rank constraint acts as an optimization regularizer that can improve critic learning.
\begin{figure}[H]
    \centering
    \includegraphics[width=\linewidth]{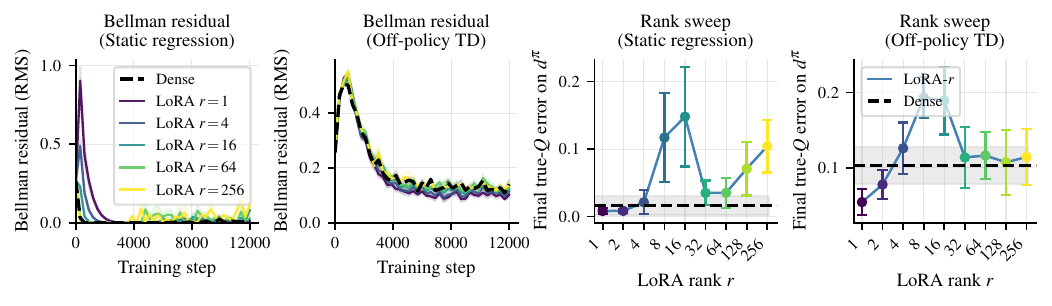}
    \caption{
        A motivating example: \textbf{Left:} Bellman residual (RMS) over training. 
        \textbf{Right: } Final true-$Q$ error on $d^{\pi}$ vs.\
        LoRA rank with 5 seeds; dashed line is the dense
        baseline.
    }
    \label{fig:toy_combined}
\end{figure}

\section{Low-Rank Adaptation for Critic Learning}
\label{sec:method}

In this work, we propose to use Low-Rank Adaptation (LoRA)~\cite{hu2022lora} as a structural constraint for critic learning. Figure~\ref{fig:framework} illustrates how the proposed framework can be integrated with different critic backbones, including SimbaV2 and BRC. Across architectures, the dense forward backbone is preserved, while learning is restricted to low-rank residual updates. This makes the approach architecture-agnostic and easy to adapt to existing off-policy critic designs.

\begin{figure}[h]
    \centering
    \includegraphics[width=\linewidth]{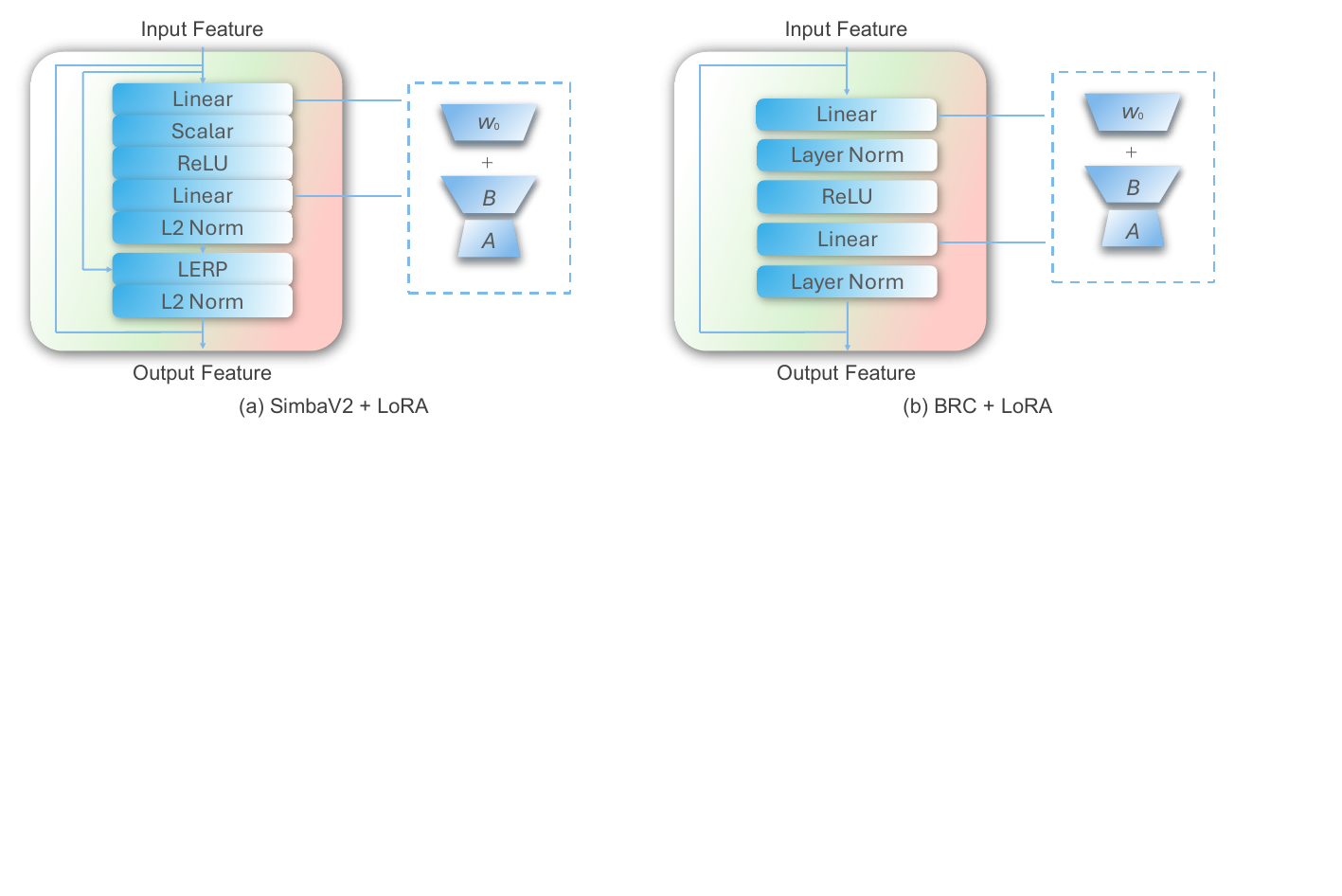}
    \caption{\textbf{Overview of the proposed LoRA-based critic architecture.} For each linear layer in the critic residual block, we freeze the randomly initialized dense base matrix \(W_0\) and train only the low-rank adapters \(A\) and \(B\). Panel (a) represents the LoRA method applied on the SimbaV2 architecture. Panel (b) illustrates the LoRA method for the BRC with BroNet structure.
}
\label{fig:framework}
\end{figure}

\subsection{LoRA Parameterization for Critic Networks}
\label{sec:lora_param}

Consider a generic linear map in a critic residual block,
\[
y = Wx, \qquad W\in\mathbb{R}^{d_{\mathrm{out}}\times d_{\mathrm{in}}}.
\]
In the dense baseline, \(W\) is directly optimized. In our LoRA critic, we instead use
\begin{equation}
\label{eq:lora_eff_weight}
W_{\mathrm{eff}}
=
W_0+\Delta W,
\qquad
\Delta W
=
\frac{\alpha_{\mathrm{lora}}}{r}BA,
\end{equation}
where \(W_0\) is frozen after initialization, \(A\in\mathbb{R}^{r\times d_{\mathrm{in}}}\) and \(B\in\mathbb{R}^{d_{\mathrm{out}}\times r}\) are trainable adapter factors, \(r\) is the LoRA rank, and \(\alpha_{\mathrm{lora}}\) is LoRA scaling factor. Throughout all reported experiments, we set \(\alpha_{\mathrm{lora}}=r\), so that \(\alpha_{\mathrm{lora}}/r=1\). This avoids introducing an additional scaling hyperparameter while retaining the standard LoRA parameterization.

The critic forward pass uses
\begin{equation}
\label{eq:lora_forward}
y = W_{\mathrm{eff}}x
=
\left(W_0+\frac{\alpha_{\mathrm{lora}}}{r}BA\right)x,
\end{equation}
and gradients are applied only to \(A\) and \(B\) for LoRA-modified matrices.

We apply this replacement only to critic residual-block linear maps. For SimbaV2 critics, these are the inverted-bottleneck MLP weights \(\{W_{h,1}^{\ell},W_{h,2}^{\ell}\}_{\ell=0}^{L-1}\) in Eq.~\eqref{eq:simbav2_block}. For BRC, these are the corresponding residual-block linear maps. We keep the actor, input embedding layer, output value head, temperature parameters, and normalization statistics unchanged from the original baseline implementations. This design applies LoRA only to the dominant scaled critic parameters, while avoiding task-specific rank choices for environment-dependent input and output layers.

\paragraph{LoRA with Weight hyperspherical Normalization.}
LoRA can be directly applied to BRC, since its critic backbone does not impose additional constraints on the weight matrices. SimbaV2, however, applies a post-update hyperspherical projection that constrains each weight vector to have unit norm. This creates a compatibility issue for LoRA: directly normalizing the effective weight $W_0+\Delta W$ would also rescale the frozen base matrix $W_0$, violating the intended LoRA parameterization. To address this, we design a LoRA-compatible projection, described in Appendix~\ref{sec:ab_norm}, that enforces the unit-norm constraint on the effective weight while keeping the frozen base matrix unchanged.

For each weight vector, instead of normalizing $w_0+\Delta w$ directly, our projection keeps $w_0$ fixed and solves for a scalar $s$ such that
\begin{equation}\label{eq:solve_rank}
    \|w_0+s\Delta w\|_2=1.
\end{equation}
To make this projection well defined, we initialize each frozen base vector with norm strictly smaller than one, $\|w_0\|_2<1$. We use $\|w_0\|_2=0.5$ in our experiments, and later ablation studies show that this choice has little effect for the larger critic.
This leaves sufficient ``norm budget'' for the LoRA residual to move the effective weight onto the unit sphere while keeping the base vector fixed. The scalar $s$ is then absorbed into the LoRA update, preserving the frozen random base weights while ensuring that the effective weights satisfy the SimbaV2 hyperspherical constraint.

This projection also affects the LoRA initialization. Standard LoRA initializes one adapter factor to zero, yielding $\Delta W=0$ so that the model initially matches the pretrained backbone. In our setting, however, the SimbaV2-compatible projection requires a nonzero update direction to solve for the scalar $s$ in Eq.~\eqref{eq:solve_rank}. If $\Delta w=0$, this direction is undefined and the effective weight norm cannot be adjusted while keeping $w_0$ fixed.

We therefore initialize both adapter factors with small random values:
\begin{equation}
A_{ij}\sim\mathcal{N}\!\left(0,\frac{1}{r}\right),
\qquad
B_{jk}\sim\mathcal{N}\!\left(0,\frac{1}{d_{\mathrm{out}}}\right),
\end{equation}
so that $\Delta W=(\alpha_{\mathrm{lora}}/r)BA$ is nonzero at initialization. This provides a valid projection direction while keeping the initial residual small, and we use this initialization across all LoRA experiments for consistency.

\section{Experiments}
\label{sec:experiments}
In this section, we demonstrate that LoRA-regularized critics reduce overfitting in off-policy reinforcement learning and improve performance across diverse architectures and environments. We evaluate our method in three settings: (i) SAC+SimbaV2~\cite{lee2025hyperspherical} on seven DeepMind Control (DMC)-Hard locomotion tasks~\cite{tassa2018deepmind}; (ii) SAC+BRC~\cite{nauman2025brc} with BroNet~\cite{nauman2024bigger} on the same seven DMC-Hard tasks; and (iii) FastTD3+SimbaV2$^\star$~\cite{seo2025fasttd3} on six IsaacLab robotics tasks~\cite{Mittal_2023}. Here, $^\star$ indicates that the official FastTD3 implementation~\cite{seo2025fasttd3} does not use hyperspherical weight normalization. Detailed experimental settings are provided in Appendix~\ref{ap:hyperparams}, and complete results are reported in Appendix~\ref{ap:experiment}.

\paragraph{Baselines.}
In each setting, we compare our method against the corresponding dense reference implementation and a static sparse baseline. The dense references include \textbf{SimbaV2}~\cite{lee2025hyperspherical}, a state-of-the-art architecture that has been widely adopted in recent reinforcement learning algorithms, and \textbf{BRC}~\cite{nauman2025brc}, a recently proposed Bigger, Regularized, Categorical neural network architecture. The \textbf{Sparse} baselines apply one-shot random pruning~\cite{ma2025sparsity}, with the sparsity level chosen to approximately match the trainable critic-parameter budget of LoRA. This baseline reflects prior evidence that sparsity can unlock improved performance in large critic networks. In contrast, our approach freezes the dense base matrices in the critic residual blocks and trains only the low-rank adapters.

% \begin{figure}[h]
%     \centering
%     \includegraphics[width=1.0\linewidth]{figures/main_figure.png}
%     \caption{\textbf{Performance comparison across FastTD3 on IsaacLab and SAC on DMC tasks.} We report average return versus normalized training steps for three variants built on the SimbaV2 backbone: the full-parameter baseline (SimbaV2), static sparse networks (Sparse), and our proposed LoRA-based critics (LoRA). Results are averaged over 5 seeds, with shaded regions indicating standard deviation.}
%     \label{fig:main_figure}
% \end{figure}

\begin{figure}[h]
    \centering
    \includegraphics[width=1.0\linewidth]{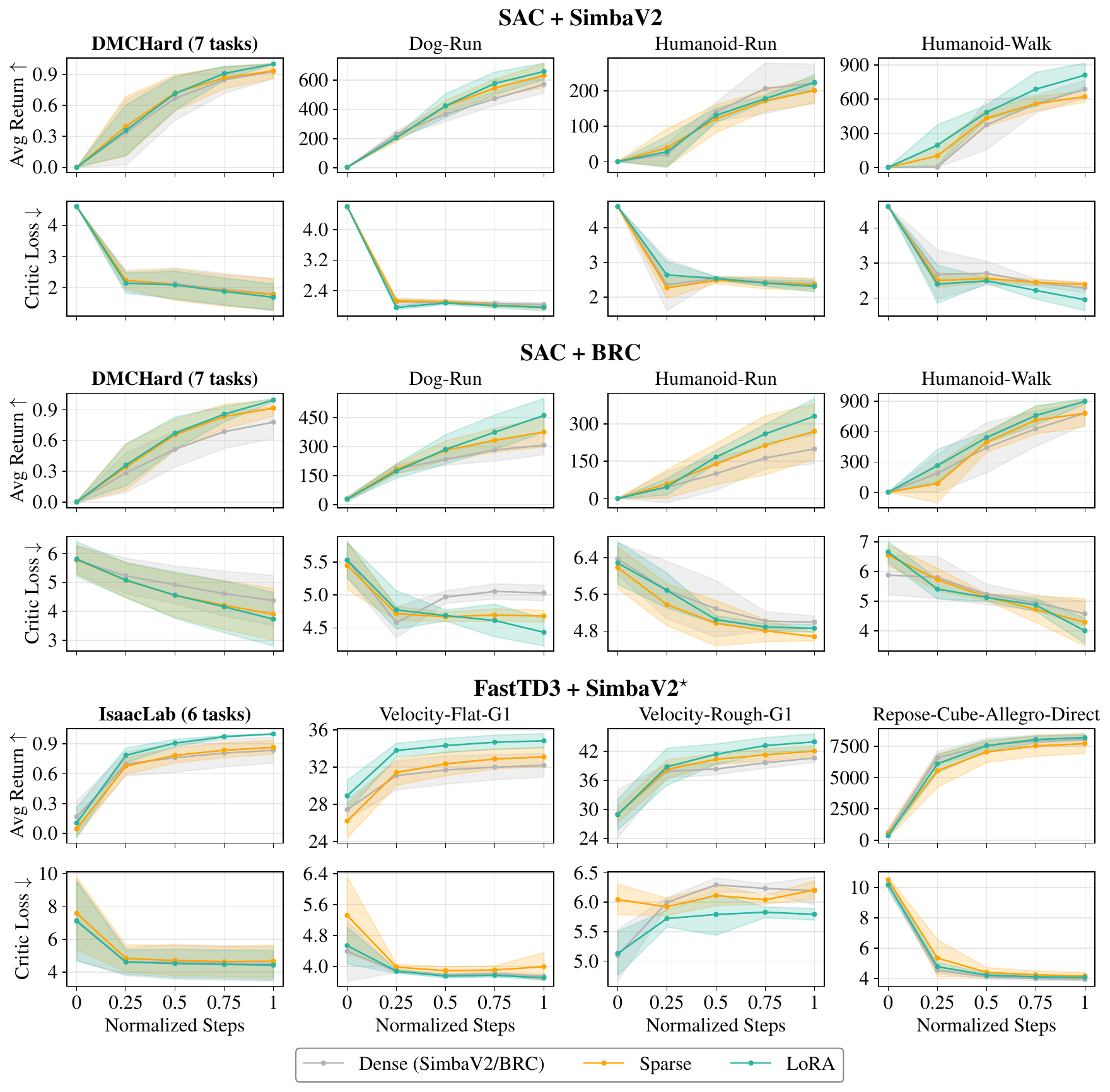}
    \caption{\textbf{Performance comparison across RL algorithms and network architectures.}
We evaluate three settings: SAC+SimbaV2 on DMC-Hard tasks, SAC+BRC on DMC-Hard tasks, and FastTD3+SimbaV2$^\star$ on IsaacLab robotics tasks. For each setting, the leftmost column reports the average normalized return and critic loss across all tasks, while the remaining columns show representative individual tasks. LoRA consistently achieves higher returns than the dense and sparse baselines across most settings and tasks, while maintaining comparable or lower critic loss during training. Results are averaged over 5 seeds, with shaded regions indicating standard deviation. The full per-environment learning curves can be found in Fig~\ref{fig:full_results}.}
    \label{fig:main_results}
\end{figure}

\paragraph{Main results.} 
Figure~\ref{fig:main_results} presents the main performance comparison across algorithms, architectures, and environments. For each setting, we report both policy performance measured by average return, and the critic loss.
For a fair comparison, we choose the LoRA rank and sparse ratio so that the sparse baselines have a comparable number of trainable critic parameters to our method, and the detailed parameter count can be found in Table~\ref{tab:param_counts}. In the SimbaV2 experiments, LoRA uses rank $r$=96, while the sparse baseline uses a sparsity ratio of 0.85. In the BRC experiments, LoRA uses rank $r$=128, while the sparse baseline uses a sparsity ratio of 0.9. Complete numerical results are provided in Appendix~\ref{ap:experiment}, with detailed performance summaries in Tables~\ref{tab:dmc_results}, \ref{tab:brc_bronet_results}, and~\ref{tab:isaaclab_results}.
Across all settings, LoRA achieves the strongest overall performance. Static sparsity already improves over the full-parameter SimbaV2 and BRC baselines, often accelerating early learning and yielding modest gains in final return. This trend is also reflected in the critic-loss curves, where Sparse typically reduces the loss faster than the full-parameter baseline early in training and then maintains a similar or slightly lower loss level.
LoRA provides a clear additional gain beyond static sparsity. Across SAC and FastTD3, LoRA achieves the best return on nearly all aggregate and representative tasks, with particularly clear improvements on IsaacLab and challenging locomotion domains. LoRA also generally yields a stronger critic-loss profile than SimbaV2, BRC, and their Sparse variants, often reaching the lowest loss in later training. We attribute these gains to the regularizing effect of low-rank critic adaptation: under the non-stationary and distribution-shifted data encountered in reinforcement learning, unconstrained critics can overfit to replay samples and learn brittle value estimates. In contrast, LoRA restricts the critic update space, improving fitting stability and enabling more robust value learning.

\paragraph{LoRA from Scratch vs. Dense Warm-Up. } Figure~\ref{fig:lora_warm} compares pure LoRA training with hybrid schemes that first train the critic densely for an initial fraction of training and then switch to LoRA. Across all four tasks, pure LoRA remains consistently competitive and often achieves the strongest final return. Dense warm-up can occasionally reduce the early performance gap or provide small transient gains, but these benefits are not consistent and do not reliably improve final performance. In particular, even when one-quarter or one-half of training uses unconstrained dense updates, the resulting hybrid methods do not systematically outperform pure LoRA.
This result suggests that LoRA is not merely useful as a lightweight fine-tuning stage after dense critic training. Instead, its benefit appears to come from constraining critic updates throughout learning. Applying the low-rank constraint from the start shapes the optimization trajectory by limiting the critic’s tendency to fit noisy bootstrapped targets, over-adapt to transient replay-buffer artifacts, or drift under non-stationary data. Once the critic has already undergone a long dense-training phase, these effects may be difficult for a later LoRA phase to correct. Overall, these results support LoRA as a training-time structural regularizer.

\begin{figure}[h]
    \centering
    \includegraphics[width=1.0\linewidth]{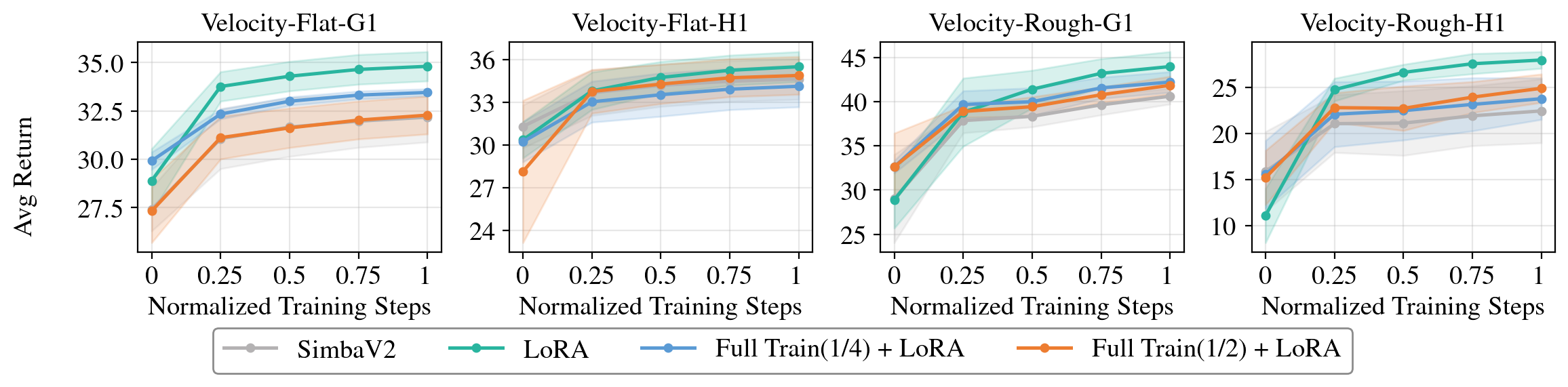}
    \caption{Comparison of pure LoRA training and hybrid schemes that perform dense updates for a fraction of training before switching to LoRA. Pure LoRA achieves competitive or superior performance, while additional dense pretraining provides only modest gains. Results are averaged over 5 seeds, with shaded regions indicating standard deviation.}
    \label{fig:lora_warm}
\end{figure}
% \begin{figure}[H]
%     \centering
%     \includegraphics[width=1.0\linewidth]{figures/sparse_lora_grid.png}
%     \caption{\textbf{Performance and parameter trade-off under critic scaling.}
%     Episode return versus the number of \emph{trainable} critic parameters on two challenging
%     DMC tasks at two model scales (\texttt{Base} and \texttt{Large}).
%     Shaded regions denote mean $\pm$ standard deviation over 3 seeds.}
%     \label{fig:param_tradeoff}
% \end{figure}

\section{Ablation Study}
\label{ap:ablation}
In this section, we conduct a series of ablation studies using the SimbaV2 backbone at two model scales: a smaller critic with $L=2$ and $d_h=512$, and a larger critic with $L=4$ and $d_h=1024$. These studies are designed to isolate the key design choices of our method, including the frozen base matrix, hyperspherical weight normalization, and low-rank parameterization.

\begin{figure}[h]
    \centering
    \includegraphics[width=1.0\linewidth]{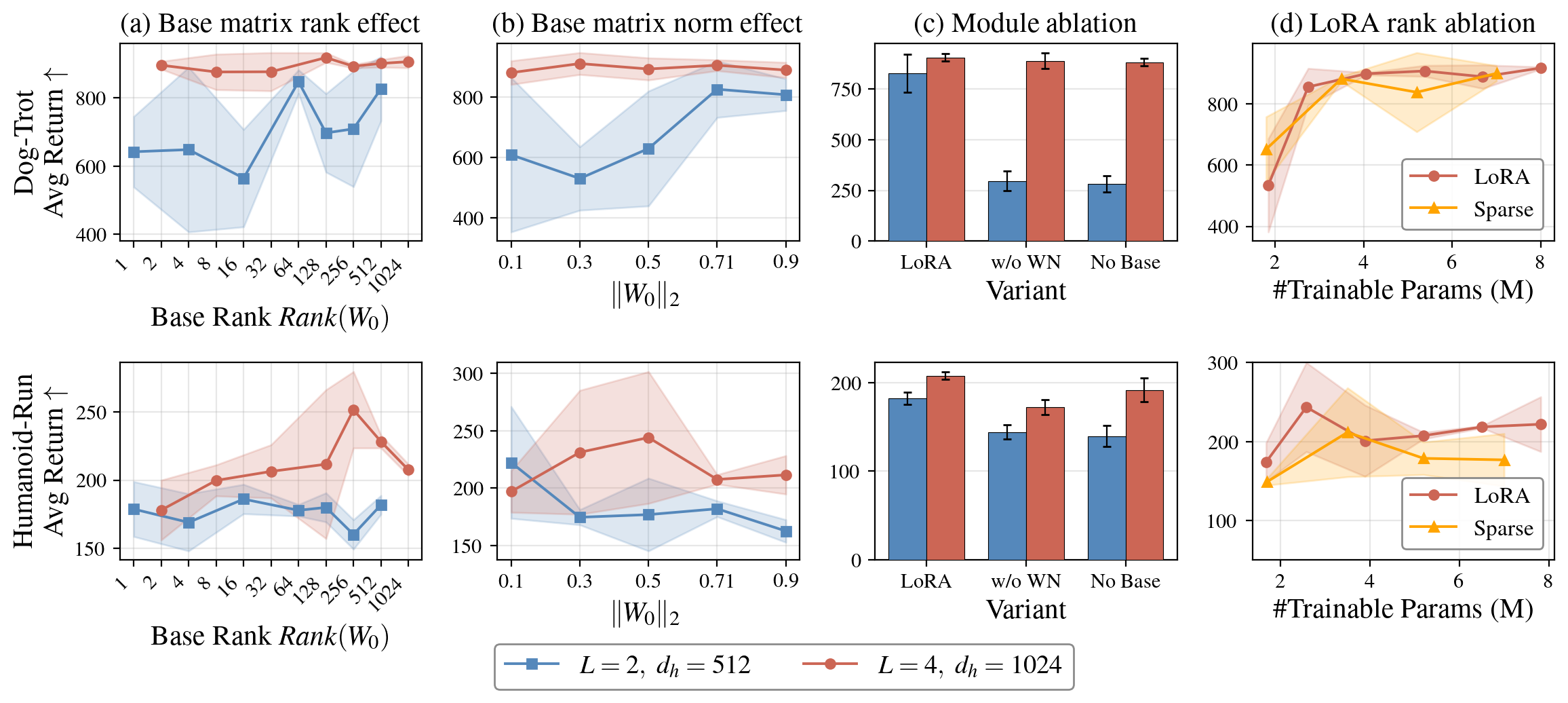}
    \caption{\textbf{Ablation study of the LoRA critic design.} We evaluate four key design choices on \texttt{Dog-Trot} (top) and \texttt{Humanoid-Run} (bottom): \textbf{(a)} the rank of the frozen base matrix \(W_0\), \textbf{(b)} the initialization norm \(\lVert w_0 \rVert_2\), \textbf{(c)} the effect of removing hyperspherical weight normalization or the frozen base matrix, and \textbf{(d)} the trainable-parameter trade-off obtained by varying the LoRA rank and comparing against static sparsity. Results are shown for two critic scales}
    \label{fig:ablation}
\end{figure}

\paragraph{Effect of the frozen base-matrix rank.}
Following~\cite{lee2025hyperspherical}, we initialize the frozen base matrix as an orthogonal matrix, which is full rank by default. To study the role of this fixed random basis, we vary the rank of the frozen base matrix while keeping the LoRA update rank unchanged. Figure~\ref{fig:ablation}(a) shows that LoRA is generally robust to this choice, especially with the larger critic. The smaller critic is more sensitive to the base-matrix rank, particularly on \texttt{Dog-Trot}, suggesting that the fixed base representation becomes more important when critic capacity is limited. In contrast, for deeper critics with residual connections, performance is much less affected by the base-matrix rank, likely because the architecture already provides sufficient representational capacity. Overall, these results indicate that our method does not rely on a narrowly tuned base-matrix rank.

\paragraph{Effect of the base-matrix norm.}
Figure~\ref{fig:ablation}(b) examines how the initialization scale of the frozen base-matrix norm $\|w_0\|_2$ affects LoRA training. In our experiments, we use $\|w_0\|_2=0.5$. Overall, the larger critic achieves higher returns than the smaller critic and is less sensitive to this choice. The strongest sensitivity appears on \texttt{Humanoid-Run}: the larger critic performs best with intermediate norms, whereas the smaller critic favors smaller norms and degrades as $\|w_0\|_2$ increases. These results suggest that the initialization scale of the frozen backbone interacts with both task difficulty and critic capacity. In practice, intermediate sub-unit norms provide sufficient headroom for the LoRA residual and tend to yield stronger RL performance.

\paragraph{Effect of module design.}
\label{ap:module_design}
Figure~\ref{fig:ablation} (c) compares our proposed normalized LoRA against two ablations: \emph{LoRA-NoWN}, which removes the
weight normalization mechanism introduced in~\eqref{eq:implicit_s_constraint}, and \emph{NoBase}, which parameterizes the critic as $W = BA$ without a
frozen backbone $W_0$. Introducing weight normalization consistently
improves RL performance and robustness across tasks and model sizes; the
effect is most pronounced for the smaller critic,
where removing either component leads to a substantial drop in return. For the larger critic, the three variants perform more similarly, and normalized LoRA remains among the strongest variants, although the gap becomes smaller.

%These results indicate that the frozen base matrix and weight normalization are complementary: the former supplies a stable feature basis, while the latter keeps the low-rank update well-conditioned, together stabilizing critic optimization and enabling effective large-scale reinforcement learning.

\paragraph{Parameter efficiency at matched budget.}
Figure~\ref{fig:ablation} (d) fixes the critic architecture at \(L{=}4, d_h{=}1024\) and varies the LoRA rank \(r \in \{16, 32, 64, 96, 128, 160\}\), plotting final return against the number of trainable parameters. As a controlled baseline, we compare against Sparse~\cite{ma2025sparsity}, which achieves a similar parameter budget by pruning a dense critic rather than restricting updates to a low-rank subspace. We observe that LoRA remains broadly robust across ranks in both settings and outperforms Sparse on both tasks over most of the parameter range we evaluate. 
These results suggest that, under a fixed capacity budget, structured low-rank updates provide a more effective parameterization than unstructured sparsity.

\section{Conclusion}

In this work, we study Low-Rank Adaptation (LoRA) as a structural regularizer for critic learning in off-policy reinforcement learning. By freezing randomly initialized base matrices and optimizing only low-rank adapters, our method constrains critic updates to a low-dimensional subspace, helping mitigate overfitting in replay-based bootstrapped training. Empirically, across SAC and FastTD3 with SimbaV2 critics, as well as SAC with BRC critics, our method consistently reduces critic loss and improves policy performance over strong full-parameter and sparse baselines. These results suggest that controlling critic overfitting is itself an effective principle for improving off-policy RL, and opens a broader direction for designing architectures and training schemes that explicitly regularize value learning under non-stationary replay data.

\bibliographystyle{unsrt}  
\bibliography{reference}

\clearpage
\appendix
\section{Details of the Motivating Toy Experiment}
\label{app:toy_example}

\subsection{Environment}

The MDP is a 15-state chain $\mathcal{S} = \{0, \ldots, 14\}$ with two
actions $\mathcal{A} = \{0\text{ (left)}, 1\text{ (right)}\}$.
Transitions are stochastic: action $a$ succeeds with probability
$p = 0.9$, moving the agent one step in the intended direction, and
fails (the agent stays) with probability $1 - p = 0.1$.
States are reflected at both endpoints.
The reward is $R(s, a) = p \cdot \mathbf{1}[s + 1 = 14]$ for the
right action from state 13, and zero otherwise; the discount factor is
$\gamma = 0.97$.

The \textbf{behaviour policy} $\mu$ selects each action uniformly at
random.  The \textbf{target policy} $\pi$ always selects \textit{right}.
The distributional shift between $\mu$ and $\pi$ is the source of
off-policy bias: the replay buffer reflects the random-walk stationary
distribution of $\mu$ (concentrated near the centre of the chain),
while the evaluation distribution $d^{\pi}$ is concentrated near the
goal state.

The true value function $Q^{\pi}$ is computed analytically by solving
the Bellman system
$(I - \gamma P^{\pi})V^{\pi} = R^{\pi}$ and then
$Q^{\pi}(s, a) = R(s, a) + \gamma \sum_{s'} P(s'|s,a) V^{\pi}(s')$.

\subsection{Feature Map}

Both the dense and LoRA critics operate on a shared fixed random
feature map
\begin{equation}
    \phi(s, a) = \tanh\!\left(W_\phi\, e_{(s,a)}\right)
    \in \mathbb{R}^{D},\quad D = 64,
\end{equation}
where $e_{(s,a)} \in \mathbb{R}^{|\mathcal{S}|+|\mathcal{A}|}$ is the
concatenation of one-hot encodings of $s$ and $a$, and
$W_\phi \in \mathbb{R}^{D \times (|\mathcal{S}|+|\mathcal{A}|)}$ is
drawn once from $\mathcal{N}(0, 1/\sqrt{D})$ and frozen throughout.
The feature table is precomputed for all $(s, a)$ pairs.

\subsection{Critic Architectures}

The critic is a two-hidden-layer MLP with hidden width $H = 256$ and
ReLU activations:
\begin{equation}
    Q_\theta(s,a)
    = w_{\mathrm{out}}^\top \,\mathrm{ReLU}\!\left(
        W_1^{\mathrm{eff}}\, \mathrm{ReLU}\!\left(
            W_0^{\mathrm{eff}}\, \phi(s,a)
        \right)
    \right),
\end{equation}
with $W_0 \in \mathbb{R}^{H \times D}$,
$W_1 \in \mathbb{R}^{H \times H}$, and
$w_{\mathrm{out}} \in \mathbb{R}^{H}$.
No bias terms are used.

\paragraph{Dense parameterisation.}
$W_0^{\mathrm{eff}} = W_0$ and $W_1^{\mathrm{eff}} = W_1$ are both
trained freely; $w_{\mathrm{out}}$ is also trained.
Total trainable parameters: $HD + H^2 + H$.

\paragraph{LoRA-$r$ parameterisation.}
Both weight matrices are decomposed as
\begin{equation}
    W_i^{\mathrm{eff}} = W_i^{\mathrm{base}} + B_i A_i,
    \qquad i \in \{0, 1\},
\end{equation}
where $W_i^{\mathrm{base}}$ is frozen at its initialisation value
(identical to the dense critic's starting point), and
$A_i$, $B_i$ are the trainable low-rank factors with
$A_0 \in \mathbb{R}^{r \times D}$,
$A_1 \in \mathbb{R}^{r \times H}$,
$B_0, B_1 \in \mathbb{R}^{H \times r}$.
Following standard LoRA initialisation, $A_i$ is drawn from
$\mathcal{N}(0, 1/\sqrt{\mathrm{fan\_in}})$ and $B_i$ is initialised
to zero, so $W_i^{\mathrm{eff}} = W_i^{\mathrm{base}}$ at step~0 and
both critics produce identical predictions at initialisation.
The output layer $w_{\mathrm{out}}$ is trained in both cases.
The rank is swept over $r \in \{1, 2, 4, 8, 16, 32, 64, 128, 256\}$.

\subsection{Training Regimes}

\paragraph{Static regression.}
At each step a mini-batch of $(s, a)$ pairs is drawn uniformly from
$\mathcal{S} \times \mathcal{A}$.
The regression target is the precomputed true value
$y = Q^{\pi}(s, a)$, and the loss is
$\mathcal{L} = \frac{1}{2}\,\mathbb{E}\bigl[(Q_\theta(s,a) - y)^2\bigr]$.
There is no bootstrapping and no distributional shift in this regime;
it serves as a controlled baseline to measure the pure capacity effect
of the rank constraint.

\paragraph{Bootstrapped off-policy TD.}
Transitions $(s, a, r, s')$ are sampled from a fixed replay buffer of
500 transitions collected under $\mu$.
The TD target is
\begin{equation}
    y = r + \gamma\,\bar{Q}(s',\,\pi(s')),
\end{equation}
where $\bar{Q}$ is a \emph{target network}---a slow-moving copy of
$Q_\theta$ updated via Polyak averaging
$\bar{\theta} \leftarrow (1 - \tau)\,\bar{\theta} + \tau\,\theta$
with $\tau = 0.02$.
The action used in the bootstrap is always $\pi(s')$ (the target
policy), even though the stored transition was generated under $\mu$,
which is the source of the off-policy distributional shift.
The loss is $\mathcal{L} = \frac{1}{2}\,\mathbb{E}\bigl[(Q_\theta(s,a) - \mathrm{sg}(y))^2\bigr]$,
where $\mathrm{sg}(\cdot)$ denotes stop-gradient.

\subsection{Evaluation Metrics}

All metrics are computed every 300 training steps.

\begin{itemize}[leftmargin=*,nosep]
    \item \textbf{True-$Q$ error on $d^{\pi}$:}
        $\varepsilon_Q = \sqrt{\sum_s d^{\pi}(s)\,\bigl(Q_\theta(s,\pi(s)) - Q^{\pi}(s,\pi(s))\bigr)^2}$.
        This is the primary metric; it measures critic quality
        under the target policy's state distribution.

    \item \textbf{Bellman residual (RMS):}
        $\varepsilon_B = \sqrt{\frac{1}{|\mathcal{S}||\mathcal{A}|}\sum_{s,a}\bigl(Q_\theta(s,a) - \mathcal{T}^{\pi}Q_\theta(s,a)\bigr)^2}$,
        where $\mathcal{T}^{\pi}Q(s,a) = R(s,a) + \gamma\sum_{s'} P(s'|s,a)\,Q(s',\pi(s'))$
        is the exact Bellman operator.
        A low Bellman residual means the critic is self-consistent with
        its own Bellman equation; note that this is computed with the
        true transition model and is \emph{not} the training loss.
\end{itemize}

\subsection{Hyperparameters}
The hyperparameters are listed in Table~\ref{tab:toy_hparams}.

\begin{table}[h]
    \centering
    \caption{Hyperparameters for the toy motivating experiment.}
    \label{tab:toy_hparams}
    \begin{tabular}{lc}
        \toprule
        Hyperparameter & Value \\
        \midrule
        \multicolumn{2}{l}{\textit{MDP}} \\
        Number of states $|\mathcal{S}|$ & 15 \\
        Number of actions $|\mathcal{A}|$ & 2 \\
        Transition success probability $p$ & 0.9 \\
        Discount factor $\gamma$ & 0.97 \\
        \midrule
        \multicolumn{2}{l}{\textit{Feature map \& critic}} \\
        Feature dimension $D$ & 64 \\
        Hidden width $H$ & 256 \\
        Activation & ReLU \\
        \midrule
        \multicolumn{2}{l}{\textit{Replay \& policies}} \\
        Replay buffer size & 500 \\
        Behaviour policy $\mu$ & Uniform random \\
        Target policy $\pi$ & Always right \\
        \midrule
        \multicolumn{2}{l}{\textit{Optimisation}} \\
        Optimiser & Adam \\
        Learning rate & $10^{-3}$ \\
        Batch size & 64 \\
        Training steps & 12\,000 \\
        Target network Polyak rate $\tau$ & 0.02 \\
        Evaluation interval & 300 steps \\
        \midrule
        \multicolumn{2}{l}{\textit{LoRA}} \\
        Rank sweep $r$ & $\{1,2,4,8,16,32,64,128,256\}$ \\
        $A_i$ initialisation & $\mathcal{N}(0, 1/\sqrt{\mathrm{fan\_in}})$ \\
        $B_i$ initialisation & $0$ \\
        \midrule
        Random seeds & 5 \\
        \bottomrule
    \end{tabular}
\end{table}

\section{Hyperspherical Weight Normalization with LoRA}
\label{sec:ab_norm}
This section applies only to the SAC+SimbaV2 experiments. The SAC with SimbaV2 reference implementation performs post-update hyperspherical projection of linear weights while FastTD3 with SimbaV2 and SAC with BRC do not use this post-update weight projection. The same projection could be used with different backbones that impose row-wise hyperspherical weight constraints, but it is not used for the FastTD3 and BRC configurations evaluated here.

\begin{figure}[t]
    \centering
    \includegraphics[width=0.50\linewidth]{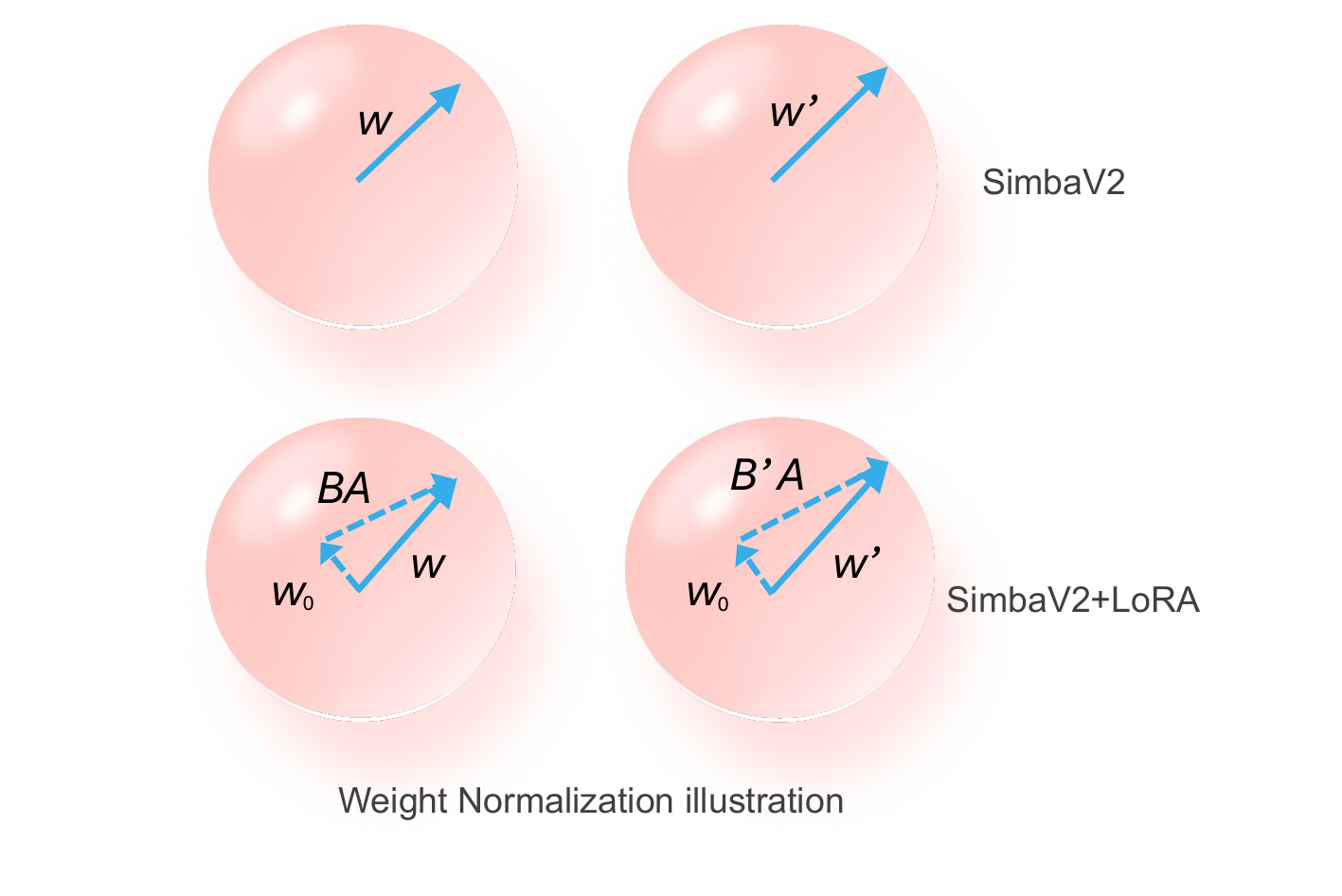}
    \caption{\textbf{LoRA-compatible hyperspherical projection.}
    Top: standard SimbaV2 projects each updated weight vector \(W\) onto the unit
    hypersphere to obtain \(W'\). Bottom: in the LoRA parameterization, the base
    vector \(W_0\) is frozen, so directly normalizing \(W_0+\Delta W\) would also
    rescale the frozen base. Instead, for the SAC+SimbaV2 setting, we solve for a
    scalar \(s_j>0\) and absorb it into the LoRA update so that
    \(W_0+s_j\Delta W\) lies on the unit hypersphere while \(W_0\) remains fixed.}
    \label{fig:weight_norm}
\end{figure}

SimbaV2 uses hyperspherical weight normalization, projecting each row of a weight matrix to unit norm after every update. Under LoRA, maintaining this constraint is non-trivial (see Appendix~\ref{ap_weight_norm}), because the base matrix $W_0$ is frozen and cannot be directly renormalized. To address this, we propose an effective-weight projection rule for LoRA. After each update to $(B,A)$, we form the residual $\Delta W$ in Eq.~\eqref{eq:lora_eff_weight} and solve Eq.~\eqref{eq:implicit_s_constraint} for a per-output scaling factor $s_j$. This keeps the effective row on the unit hypersphere while leaving the frozen base row unchanged:
\begin{equation}
\label{eq:implicit_s_constraint}
\big\| [W_0]_{j,:} + s_j [\Delta W]_{j,:} \big\|_2 \;=\; 1.
\end{equation}
Let \(w_j = [W_0]_{j,:}\) and \(\delta_j = [\Delta W]_{j,:}\) denote the \(j\)-th rows of \(W_0\) and \(\Delta W\), respectively. Squaring
Eq.~\eqref{eq:implicit_s_constraint} yields a quadratic equation in $s_j$:
\begin{equation}
\label{eq:implicit_standard_form}
(\|\delta_j\|_2^2) s_j^2 + 2\langle w_j,\delta_j\rangle s_j + (\|w_j\|_2^2 - 1) \;=\; 0.
\end{equation}
When $\|\delta_j\|_2>0$, the solutions are
\begin{equation}
\label{eq:s_closed_form_general}
s_j
=
\frac{-\langle w_j,\delta_j\rangle \;\pm\;
\sqrt{\langle w_j,\delta_j\rangle^2 - \|\delta_j\|_2^2(\|w_j\|_2^2-1)}}
{\|\delta_j\|_2^2},
\end{equation}

In practice, we clamp the denominator by a small \(\varepsilon\) only for numerical stability. In our implementation, we select the positive root $s_j > 0$ so that the LoRA update preserves its direction throughout training.

The closed-form solution in Eq.~\eqref{eq:s_closed_form_general} depends explicitly on both the norm of the base matrix row, \(\|w_j\|_2\), and the squared norm of the LoRA update, \(\|\delta_j\|_2^2\). This observation motivates the initialization scheme for the frozen base matrix and the LoRA adapters described below.

\section{Why standard row-wise weight normalization is incompatible with frozen LoRA bases}
\label{ap_weight_norm}

A natural approach to combining LoRA with hyperspherical weight normalization is to apply the standard row-wise normalization directly to the effective weight,
\begin{equation}
    W_{\mathrm{eff}}
    \leftarrow
    \frac{W_0 + \Delta W}{\|W_0 + \Delta W\|_2 + \varepsilon},
\end{equation}
where the normalization is applied independently to each row. Although this projects the effective weight onto the unit hypersphere in one step, it is incompatible with the frozen-base parameterization used by LoRA.
To see this, consider a single row and let
\[
w_j = [W_0]_{j,:}, \qquad \delta_j = [\Delta W]_{j,:}.
\]
Then standard row-wise normalization gives
\begin{equation}
    w_j + \delta_j
    \;\mapsto\;
    \frac{w_j + \delta_j}{\|w_j + \delta_j\|_2 + \varepsilon}
    \;=\;
    c_j w_j + c_j \delta_j,
\end{equation}
where
\begin{equation}
    c_j = \frac{1}{\|w_j + \delta_j\|_2 + \varepsilon}.
\end{equation}
Thus, both the frozen base component and the LoRA update are rescaled by the same factor \(c_j\). In particular, the base row is implicitly changed from \(w_j\) to \(c_j w_j\), which means that \(W_0\) is no longer truly frozen whenever \(c_j \neq 1\).
This creates two problems. First, it violates the intended LoRA assumption that the base matrix remains fixed throughout training. Second, it silently breaks the additive low-rank decomposition
\begin{equation}
    W_{\mathrm{eff}} = W_0 + \frac{\alpha_{\mathrm{lora}}}{r}BA,
\end{equation}
because after normalization the effective row can no longer be interpreted as the sum of the original frozen base row and a pure low-rank update. Instead, the normalization couples the frozen component and the trainable component through a shared multiplicative factor.

Our normalization rule avoids this issue by solving, for each row, a scalar \(s_j\) that is absorbed only into the LoRA update. Concretely, we enforce the unit-norm constraint on
\begin{equation}
    w_j + s_j \delta_j,
\end{equation}
while keeping \(w_j\) literally unchanged. In implementation, this rescaling is absorbed into the LoRA factor \(B\), so the frozen base matrix \(W_0\) remains untouched at every optimization step and the low-rank structure is preserved throughout training.

\section{Training Hyperparameters}
\label{ap:hyperparams}
Tables~\ref{tab:hyperparams_sac},~\ref{tab:hyperparams_fasttd3}, and~\ref{tab:hyperparams_brc} summarize the training hyperparameters for the SAC+SimbaV2 DMC-Hard experiments, the FastTD3+SimbaV2 IsaacLab experiments, and the SAC+BRC BroNet experiments, respectively. 
Within each setting, the dense, sparse, and LoRA variants use the same environment, replay buffer, batch size, update schedule, optimizer family, and learning-rate schedule, following the corresponding reference implementations~\cite{lee2025hyperspherical,ma2025sparsity,seo2025fasttd3,nauman2025brc,nauman2024bigger}. 
%In particular, SAC+SimbaV2 uses Adam, FastTD3+SimbaV2 uses AdamW with zero weight decay, and SAC+BRC follows the single-task BRC training setup with a BroNet. 
We note that, in our setting, BRC follows the single-task training setup described in the original BRC paper.
%The sparse baselines differ from their dense counterparts only by applying static one-shot random pruning at initialization, while the LoRA variants freeze the dense critic backbone weights and train low-rank adapter parameters together with the remaining non-LoRA critic parameters.

For our LoRA-based approach, all trainable parameters outside the LoRA modules use the same optimizer and learning-rate schedule as the corresponding SimbaV2 and BRC baselines. For the LoRA modules, we use AdamW with weight decay to control the magnitude of the low-rank residual \(BA\). Without this regularization, the learned residual can grow excessively large and dominate the effective weight \(W_0 + BA\), making the frozen base matrix \(W_0\) nearly negligible. In the extreme, this behavior resembles the \emph{NoBase} ablation in Section~\ref{ap:module_design}, where the critic is parameterized directly as \(W = BA\) without a frozen backbone.

\begin{table}[t]
\centering
\caption{SAC+SimbaV2 training hyperparameters (DMControl). All methods share the same base configuration from SimbaV2~\cite{lee2025hyperspherical}. The bottom section lists LoRA-specific parameters that apply only to the adapter factors.}
\label{tab:hyperparams_sac}
\small
\begin{tabular}{l l}
    \toprule
    \multicolumn{2}{c}{\textit{Shared across all methods (SimbaV2, Sparse, LoRA)}} \\
    \midrule
    Optimizer (base params) & Adam \\
    Learning rate (start $\to$ end) & $1 \times 10^{-4} \to 5 \times 10^{-5}$ \\
    LR schedule & Linear decay \\
    Batch size & 256 \\
    UTD (updates-to-data) ratio & 2 (action repeat = 2) \\
    Replay buffer size & $1 \times 10^{6}$ \\
    Warmup steps (min buffer) & 5{,}000 \\
    Discount $\gamma$ & Heuristic from TD-MPC2 ($\approx 0.99$) \\
    Target EMA $\rho$ & 0.005 \\
    Temperature init & 0.01 \\
    Target entropy coeff & $-0.5$ \\
    Observation normalization & Running statistics \\
    Reward normalization & Yes \\
    Normalized $G_{\max}$ & 5.0 \\
    Critic value head & Categorical ($n_{\text{atom}} = 101$, support $[-5, 5]$) \\
    Feature $\ell_2$ normalization & Yes \\
    Post-update weight hyperspherical projection & Yes \\
    Actor architecture & 1 block, $d_h = 128$ \\
    Total env steps & $1 \times 10^{6}$ \\
    $n$-step returns & 1 \\
    Double-Q & No \\
    \midrule
    \multicolumn{2}{c}{\textit{Sparse-specific}} \\
    \midrule
    Sparsity ratio ($d_h\!=\!512$, $L=2$ blocks) & 0.7 \\
    Sparsity ratio ($d_h\!=\!1024$, $L=4$ blocks) & 0.85 \\
    Pruning method & One-shot random (Erd\H{o}s--R\'{e}nyi) \\
    \midrule
    \multicolumn{2}{c}{\textit{LoRA-specific (adapter factors $A$, $B$ only)}} \\
    \midrule
    Optimizer (LoRA params) & AdamW \\
    Weight decay & $2 \times 10^{-4}$ \\
    LoRA rank $r$ & 96 \\
    LoRA scaling $\alpha_{\mathrm{lora}}$ & 96 \\
    $A$ initialization & Normal \\
    $B$ initialization & Normal \\
    base matrix init scale $\kappa$ & 0.5 \\
    LoRA applied to & Critic residual blocks only \\
    base matrix $W_0$ & Frozen (zero gradient) \\
    \bottomrule
\end{tabular}
\end{table}

\begin{table}[t]
\centering
\caption{FastTD3 training hyperparameters (IsaacLab). All methods share the same base configuration from FastTD3~\cite{seo2025fasttd3} with the SimbaV2 backbone. Task-specific defaults (e.g., $n$-step, $v_{\min}$/$v_{\max}$, total steps) follow the FastTD3 reference implementation.}
\label{tab:hyperparams_fasttd3}
\small
\begin{tabular}{l l}
    \toprule
    \multicolumn{2}{c}{\textit{Shared across all methods (SimbaV2, Sparse, LoRA)}} \\
    \midrule
    Optimizer (base params) & AdamW \\
    Weight decay (base params) & 0.0 \\
    Learning rate (start $\to$ end) & $3 \times 10^{-4} \to 3 \times 10^{-5}$ (cosine) \\
    Batch size & 8{,}192 \\
    Number of parallel envs & 4{,}096 \\
    Update interval & 4 env steps \\
    Gradient updates per interval & 16 (locomotion) / 8 (manipulation) \\
    Replay buffer size & $3{,}072$ transitions per env \\
    $n$-step returns & 8 (locomotion) / 1 (manipulation) \\
    Discount $\gamma$ & 0.99 \\
    Target EMA $\tau$ & 0.1 \\
    Exploration noise $\sigma$ & $[0.001, 0.4]$ (linear anneal) \\
    Action bounds & $[-1, 1]$ \\
    Observation normalization & Running statistics \\
    Critic value head & Categorical ($n_{\text{atom}} = 251$, see below) \\
    Feature $\ell_2$ normalization & Yes \\
    Post-update weight hyperspherical projection & No \\
    Actor architecture & 1 block, $d_h = 256$ \\
    Critic architecture & 4 blocks, $d_h = 1024$ \\
    Mixed-precision (AMP) & Yes (bf16) \\
    Double-Q & Yes \\
    \midrule
    \multicolumn{2}{c}{\textit{Task-specific value support}} \\
    \midrule
    Locomotion tasks & $v_{\min} = -10$, $v_{\max} = 10$ \\
    Manipulation tasks & $v_{\min} = -500$, $v_{\max} = 500$ \\
    \midrule
    \multicolumn{2}{c}{\textit{Sparse-specific}} \\
    \midrule
    Sparsity ratio ($d_h\!=\!1024$, $L=4$ blocks) & 0.85 \\
    Pruning method & One-shot random (Erd\H{o}s--R\'{e}nyi) \\
    \midrule
    \multicolumn{2}{c}{\textit{LoRA-specific (adapter factors $A$, $B$ only)}} \\
    \midrule
    Optimizer (LoRA params) & AdamW \\
    Weight decay & $6 \times 10^{-4}$\\
    LoRA rank $r$ & 96 \\
    LoRA scaling $\alpha_{\mathrm{lora}}$ & 96 \\
    $A$ initialization & Normal \\
    $B$ initialization & Normal \\
    base matrix init scale $\kappa$ & 0.5 \\
    LoRA applied to & Critic residual blocks only \\
    base matrix $W_0$ & Frozen (zero gradient) \\
    \bottomrule
\end{tabular}
\end{table}

\begin{table}[t]
\centering
\caption{SAC+BRC hyperparameters for DMC-Hard with BroNet. These runs follow single-task BRC~\cite{nauman2025brc} settings with the BroNet~\cite{nauman2024bigger} backbone}
\label{tab:hyperparams_brc}
\small
\begin{tabular}{l l}
\toprule
\multicolumn{2}{c}{\textit{Shared across BRC, Sparse-BRC, and LoRA-BRC}} \\
\midrule
Algorithm & SAC+BRC \\
Critic target aggregation & Ensemble average, no clipped double Q \\
Critic ensemble size & 2 \\
Optimizer & AdamW \\
Actor / critic learning rate & $3\times 10^{-4}$ \\
Weight decay & $1\times 10^{-4}$ \\
Temperature optimizer & Adam, learning rate $3\times 10^{-4}$, $\beta_1=0.5$ \\
Initial temperature & 0.1 \\
Target entropy & $-|\mathcal{A}|/2$ \\
Discount $\gamma$ & 0.99 \\
Target EMA $\tau$ & 0.005 \\
Batch size & 256 \\
UTD ratio & 2 \\
Replay buffer size & $10^6$ \\
Random action warmup & 5,000 steps \\
Total environment steps & $10^6$ \\
Evaluation episodes & 10 \\
Categorical critic & 101 atoms, support $[-10,10]$ \\
Reward normalization & BRC return normalization \\
Post-update weight hyperspherical projection & No \\
Actor architecture & 1 block, $d_h = 256$ \\
Critic architecture & 2 blocks, $d_h = 4096$ \\
\midrule
\multicolumn{2}{c}{\textit{Sparse-specific}} \\
\midrule
Critic sparsity & 0.90 \\
Pruning method & One-shot random (Erd\H{o}s--R\'{e}nyi) \\
\midrule
\multicolumn{2}{c}{\textit{LoRA-specific}} \\
\midrule
Optimizer (LoRA params) & AdamW \\
LoRA rank $r$ & 128 \\
LoRA scaling $\alpha_{\mathrm{lora}}$ & 128 \\
weight decay & $6\times 10^{-4}$ \\
$A$ initialization & Normal \\
$B$ initialization & Normal \\
base matrix init scale $\kappa$ & 0.5 \\
LoRA applied to & Critic residual blocks only \\
base matrix $W_0$ & Frozen (zero gradient) \\
\bottomrule
\end{tabular}
\end{table}

Table~\ref{tab:dmc_hard_list} describes the seven DMControl hard tasks used in our SAC experiments.
Table~\ref{tab:isaaclab_envs} describes the six IsaacLab tasks used in our FastTD3 experiments. These span locomotion (Unitree H1 and G1 robots on flat and rough terrain) and dexterous in-hand manipulation (Allegro and Shadow hands).

\begin{table}[h]
\centering
\caption{\textbf{DMC-Hard Complete List.} We evaluate a total of 7 continuous control tasks from the DMC-Hard benchmark. Below, we provide a list of all the tasks considered. The baseline performance for each task is reported at 1M environment steps.}
\label{tab:dmc_hard_list}
\begin{tabular}{l cc}
    \toprule
    \textbf{Task} & \textbf{Observation dim $|\mathcal{O}|$} & \textbf{Action dim $|\mathcal{A}|$} \\
    \midrule
    \texttt{dog-run} & 223 & 38 \\
    \texttt{dog-trot} & 223 & 38 \\
    \texttt{dog-stand} & 223 & 38 \\
    \texttt{dog-walk} & 223 & 38 \\
    \texttt{humanoid-run} & 67 & 24 \\
    \texttt{humanoid-stand} & 67 & 24 \\
    \texttt{humanoid-walk} & 67 & 24 \\
    \bottomrule
\end{tabular}
\end{table}

\begin{table}[h]
\centering
\caption{\textbf{IsaacLab Complete List.} We evaluate a total of 6 continuous control tasks from IsaacLab. Observation and action dimensions are determined by the robot morphology and task configuration. The baseline performance for each task is reported at 1M environment steps.}
\label{tab:isaaclab_envs}
\begin{tabular}{l cc}
    \toprule
    \textbf{Task} & \textbf{Observation dim $|\mathcal{O}|$} & \textbf{Action dim $|\mathcal{A}|$} \\
    \midrule
    \texttt{Isaac-Velocity-Flat-H1-v0} & 69 & 19 \\
    \texttt{Isaac-Velocity-Flat-G1-v0} & 123 & 37 \\
    \texttt{Isaac-Velocity-Rough-H1-v0} & 256 & 19 \\
    \texttt{Isaac-Velocity-Rough-G1-v0} & 310 & 37 \\
    \texttt{Isaac-Repose-Cube-Allegro-Direct-v0} & 124 & 16 \\
    \texttt{Isaac-Repose-Cube-Shadow-Direct-v0} & 157 & 20 \\
    \bottomrule
\end{tabular}
\end{table}

Table~\ref{tab:param_counts} reports the actor parameter count, critic total parameter count, and critic trainable parameter count for all three methods under different settings. Since FastTD3 uses double Q-networks whereas SAC uses a single Q-network, the critic parameter count in FastTD3 is approximately doubled even when both methods use the same SimbaV2 critic configuration, i.e., $d_h=1024$ and $L=4$.

\begin{table}[t]
\centering
\caption{
% Parameter counts for SimbaV2, Sparse-SimbaV2, and LoRA. Actor parameters are fully trainable for all methods. For SAC, we report counts using a representative Humanoid environment; for FastTD3, we use Isaac-Velocity-Flat-H1.
Parameter counts for dense, sparse, and LoRA across the SimbaV2 and BroNet+BRC settings. Actor parameters are unchanged across methods within each setting. Since tasks within each setting use the same SimbaV2 or BroNet hidden dimension and number of residual blocks, the parameter counts are approximately the same across environments, up to small differences caused by observation and action dimensions.
}
\label{tab:param_counts}
\resizebox{\linewidth}{!}{
\begin{tabular}{llcccc}
\toprule
Algorithm / Env. & Method & Critic Config. 
& Actor Params & Critic Total Params & Critic Trainable Params \\
\midrule
\multirow{6}{*}{SAC+SimbaV2 / Humanoid}
& SimbaV2 
& $d_h=512, L=2$ 
& 0.18M & 4.56M & 4.56M \\
& Sparse-SimbaV2 
& $d_h=512, L=2, s=0.70$ 
& 0.18M & 4.56M & 1.37M \\
& LoRA 
& $d_h=512, L=2, r=96$ 
& 0.18M & 5.54M & 1.35M \\
\cmidrule(lr){2-6}
& SimbaV2 
& $d_h=1024, L=4$ 
& 0.18M & 34.82M & 34.82M \\
& Sparse-SimbaV2 
& $d_h=1024, L=4, s=0.85$ 
& 0.18M & 34.82M & 5.22M \\
& LoRA 
& $d_h=1024, L=4, r=96$ 
& 0.18M & 38.75M & 5.20M \\
\midrule
\multirow{3}{*}{FastTD3 / Isaac-Velocity-Flat-H1}
& SimbaV2 
& $d_h=1024, L=4$ 
& 0.61M & 69.95M & 69.95M \\
& Sparse-SimbaV2 
& $d_h=1024, L=4, s=0.85$ 
& 0.61M & 69.95M & 10.53M \\
& LoRA 
& $d_h=1024, L=4, r=96$ 
& 0.61M & 77.81M & 10.70M \\
\midrule
\multirow{3}{*}{SAC+BRC / Humanoid}
& BroNet-BRC
& $d_h=4096, L=2$
& 0.16M & 135.89M & 135.89M \\
& Sparse-BroNet-BRC
& $d_h=4096, L=2, s=0.90$
& 0.16M & 135.89M & 13.59M \\
& LoRA-BroNet-BRC
& $d_h=4096, L=2, r=128$
& 0.16M & 144.28M & 10.06M \\
\bottomrule
\end{tabular}
}
\end{table}

\section{Detailed Experimental Results}
\label{ap:experiment}
\paragraph{Computational resources.}
All experiments were run on an internal Linux server with 8 NVIDIA RTX 6000 Ada Generation GPUs, each with 48GB of GPU memory. Each training run used one GPU unless otherwise stated. SAC experiments on DMC-Hard used CPU-based environment stepping and one GPU for network updates; FastTD3 experiments on IsaacLab used GPU-parallel simulation with 4,096 environments and bf16 mixed precision.

\paragraph{DMC-Hard results with SAC+SimbaV2.}
Table~\ref{tab:dmc_results} reports the final episode return for SAC+SimbaV2 on the seven DMC-Hard tasks. We compare dense SimbaV2, Sparse-SimbaV2 with static one-shot random pruning, and LoRA-SimbaV2 with frozen dense critic weights and trainable low-rank adapters. All numbers are computed over 5 seeds (seeds 0--4), and we report mean $\pm$ one standard deviation. Bold entries indicate the best mean return among the three methods for each task and scale.

\begin{table}[t]
\centering
\caption{Final episode return on DMControl hard locomotion tasks. Three methods are compared: SimbaV2 (full-parameter baseline), Sparse (random one-shot pruning~\cite{ma2025sparsity}), and LoRA (rank-96 adapters with frozen base matrices). Results are mean $\pm$ std over 5 seeds.}
\label{tab:dmc_results}
\small
\begin{tabular}{l ccc}
    \toprule
    \textbf{Task} & \textbf{SimbaV2} & \textbf{Sparse} & \textbf{LoRA (ours)} \\
    \midrule
    \multicolumn{4}{c}{\textit{$d_h=512$, $L=2$ blocks}} \\
    \midrule
        dog-run & \textbf{554.9} $\pm$ 96.4 & 348.8 $\pm$ 110.3 & 504.4 $\pm$ 56.8 \\
        dog-trot & \textbf{864.3} $\pm$ 37.5 & 621.3 $\pm$ 233.7 & 837.4 $\pm$ 68.1 \\
        dog-stand & \textbf{969.8} $\pm$ 12.2 & 950.3 $\pm$ 22.9 & 955.7 $\pm$ 22.5 \\
        dog-walk & \textbf{926.7} $\pm$ 26.2 & 848.9 $\pm$ 46.7 & 908.2 $\pm$ 25.0 \\
        humanoid-run & 182.4 $\pm$ 20.9 & 151.1 $\pm$ 9.5 & \textbf{221.3} $\pm$ 54.4 \\
        humanoid-stand & 814.9 $\pm$ 173.5 & 781.5 $\pm$ 133.1 & \textbf{861.8} $\pm$ 127.5 \\
        humanoid-walk & 594.3 $\pm$ 55.2 & 549.0 $\pm$ 41.2 & \textbf{620.3} $\pm$ 16.4 \\
    \midrule
        \textbf{Average} & 701.1 & 607.3 & \textbf{701.3} \\
    \midrule
    \multicolumn{4}{c}{\textit{$d_h=1024$, $L=4$ blocks}} \\
    \midrule
        dog-run & 570.4 $\pm$ 59.1 & 631.8 $\pm$ 82.2 & \textbf{658.6} $\pm$ 56.6 \\
        dog-trot & 901.3 $\pm$ 21.4 & 862.8 $\pm$ 97.8 & \textbf{911.1} $\pm$ 16.0 \\
        dog-stand & 957.0 $\pm$ 23.9 & 973.1 $\pm$ 9.2 & \textbf{978.7} $\pm$ 6.4 \\
        dog-walk & 935.1 $\pm$ 12.3 & \textbf{937.0} $\pm$ 12.4 & 936.1 $\pm$ 11.0 \\
        humanoid-run & 221.2 $\pm$ 53.6 & 200.8 $\pm$ 36.9 & \textbf{223.5} $\pm$ 22.2 \\
        humanoid-stand & 765.7 $\pm$ 100.9 & 890.1 $\pm$ 75.3 & \textbf{920.4} $\pm$ 28.1 \\
        humanoid-walk & 687.1 $\pm$ 78.0 & 622.9 $\pm$ 48.9 & \textbf{811.8} $\pm$ 104.6 \\
    \midrule
        \textbf{Average} & 719.7 & 731.2 & \textbf{777.2} \\
    \bottomrule
\end{tabular}
\end{table}

In the $d_h=512$, $L=2$ blocks settings, LoRA matches SimbaV2 on average (701.3 vs.\ 701.1) while substantially outperforming the sparse baseline (607.3).

In the $d_h=1024$, $L=4$ blocks settings, the benefit of low-rank regularization becomes clearer: LoRA achieves an average of 777.2, outperforming both SimbaV2 (719.7) and Sparse (731.2) by a substantial margin.
This widening gap at larger scale supports our central claim that constraining the update space via LoRA provides increasingly effective regularization as critic capacity grows and overfitting becomes more severe.

\paragraph{DMC-Hard results with SAC+BRC.}
Table~\ref{tab:brc_bronet_results} reports DMC-Hard evaluation using SAC+BRC with BroNet. 
This setting is included as an architecture general validation: unlike SimbaV2, BroNet uses LayerNorm-based residual blocks and does not rely on hyperspherical row projection.
Thus, improvements in this setting indicate that LoRA is not merely correcting a SimbaV2-specific parameterization.

\begin{table}[t]
\centering
\caption{Final episode return on DMC-Hard using SAC+BRC with a BroNet critic ($d_h=4096$, $L=2$). Results are mean $\pm$ standard deviation over 5 seeds. LoRA uses rank $r=128$ and $\alpha_{\mathrm{lora}}=128$.}
\label{tab:brc_bronet_results}
\small
\begin{tabular}{l ccc}
\toprule
\textbf{Task} & \textbf{BroNet-BRC} & \textbf{Sparse-BRC} & \textbf{LoRA-BRC} \\
\midrule
\texttt{dog-run} & $307.3 \pm 44.4$ & $375.0 \pm 68.3$ & $\mathbf{460.6 \pm 78.2}$ \\
\texttt{dog-trot} & $458.9 \pm 83.3$ & $\mathbf{810.8 \pm 52.2}$ & $773.8 \pm 104.4$ \\
\texttt{dog-stand} & $938.2 \pm 19.2$ & $961.0 \pm 16.2$ & $\mathbf{978.0 \pm 8.3}$ \\
\texttt{dog-walk} & $816.6 \pm 92.1$ & $868.9 \pm 39.0$ & $\mathbf{910.5 \pm 12.8}$ \\
\texttt{humanoid-run} & $199.6 \pm 53.1$ & $270.4 \pm 99.1$ & $\mathbf{330.8 \pm 63.5}$ \\
\texttt{humanoid-stand} & $871.4 \pm 118.0$ & $925.0 \pm 18.3$ & $\mathbf{930.3 \pm 15.2}$ \\
\texttt{humanoid-walk} & $783.3 \pm 100.6$ & $782.9 \pm 117.8$ & $\mathbf{901.5 \pm 25.0}$ \\
\midrule
% \textbf{Dog Avg.} & 630.3 & 753.9 & \textbf{780.7} \\
% \textbf{Humanoid Avg.} & 618.1 & 659.4 & \textbf{720.9} \\
\textbf{Average} & 625.1 & 713.4 & \textbf{755.1} \\
\bottomrule
\end{tabular}
\end{table}

LoRA-BRC achieves the best average return, improving from 625.1 for dense BroNet-BRC and 713.4 for Sparse-BRC to 755.1. 
It obtains the highest mean score on six of the seven tasks, with especially large gains on \texttt{dog-run}, \texttt{humanoid-run}, and \texttt{humanoid-walk}. 
Together with the SimbaV2 results, this suggests that low-rank critic adaptation acts as a general update-space regularizer rather than a mechanism tied to one specific model design.

% \paragraph{IsaacLab results.}
% Table~\ref{tab:isaaclab_results} reports the final episode return for each method on six IsaacLab tasks using the FastTD3 algorithm.
% All numbers are computed over 5 seeds (seeds 0--4). We report mean $\pm$ one standard deviation.
% Bold entries indicate the best mean return among the three methods for each task.

\paragraph{IsaacLab results with FastTD3+SimbaV2.}
Table~\ref{tab:isaaclab_results} reports the final episode return for FastTD3+SimbaV2 on six IsaacLab tasks. 
We compare dense SimbaV2, Sparse-SimbaV2, and LoRA-SimbaV2 using the same FastTD3 training setup and critic architecture.
All numbers are computed over 5 seeds (seeds 0--4), and we report mean $\pm$ one standard deviation.
Bold entries indicate the best mean return among the three methods for each task.

\begin{table}[t]
\centering
\caption{Final episode return on IsaacLab tasks with FastTD3. Critic architecture: $d_h=1024$, $L=4$ blocks. Results are mean $\pm$ std over 5 seeds.}
\label{tab:isaaclab_results}
\small
\begin{tabular}{l ccc}
    \toprule
    \textbf{Task} & \textbf{SimbaV2} & \textbf{Sparse} & \textbf{LoRA (ours)} \\
    \midrule
    \multicolumn{4}{c}{\textit{Locomotion}} \\
    \midrule
        Velocity-Flat-H1 & $34.6 \pm 1.4$ & $34.1 \pm 1.4$ & $\textbf{35.5} \pm 1.1$ \\
        Velocity-Flat-G1 & $32.2 \pm 1.3$ & $33.1 \pm 1.0$ & $\textbf{34.8} \pm 0.8$ \\
        Velocity-Rough-H1 & $22.4 \pm 3.5$ & $24.4 \pm 0.5$ & $\textbf{27.9} \pm 0.9$ \\
        Velocity-Rough-G1 & $40.6 \pm 0.9$ & $42.0 \pm 1.1$ & $\textbf{44.0} \pm 1.7$ \\
    \midrule
        \textbf{Locomotion Avg} & 32.5 & 33.4 & \textbf{35.6} \\
    \midrule
    \multicolumn{4}{c}{\textit{In-hand manipulation}} \\
    \midrule
        Repose-Allegro & $8022.1 \pm 473.0$ & $7703.1 \pm 791.6$ & $\textbf{8184.0} \pm 203.9$ \\
        Repose-Shadow & $\textbf{12163.2} \pm 202.1$ & $11811.1 \pm 192.1$ & $12087.1 \pm 116.3$ \\
    \midrule
        \textbf{Manipulation Avg} & 10092.7 & 9757.1 & \textbf{10135.6} \\
    \bottomrule
\end{tabular}
\end{table}

LoRA-SimbaV2 achieves the best average return on both IsaacLab locomotion and in-hand manipulation. 
On locomotion, LoRA improves the average return from 32.5 for dense SimbaV2 and 33.4 for Sparse-SimbaV2 to 35.6. 
On manipulation, LoRA obtains the best average return of 10135.6, while remaining competitive on both individual manipulation tasks. 
These results show that the benefit of low-rank critic adaptation transfers from SAC on CPU-stepped DMC-Hard tasks to FastTD3 with massively parallel GPU simulation.

Full per-environment results under the critic scale $L{=}4,\, d_h{=}1024$ are provided in Figure~\ref{fig:full_results}. All the curves are mean~$\pm$~std across 5 seeds; $x$-axis is normalized training steps.

\begin{figure}
    \centering
    \includegraphics[width=0.85\linewidth]{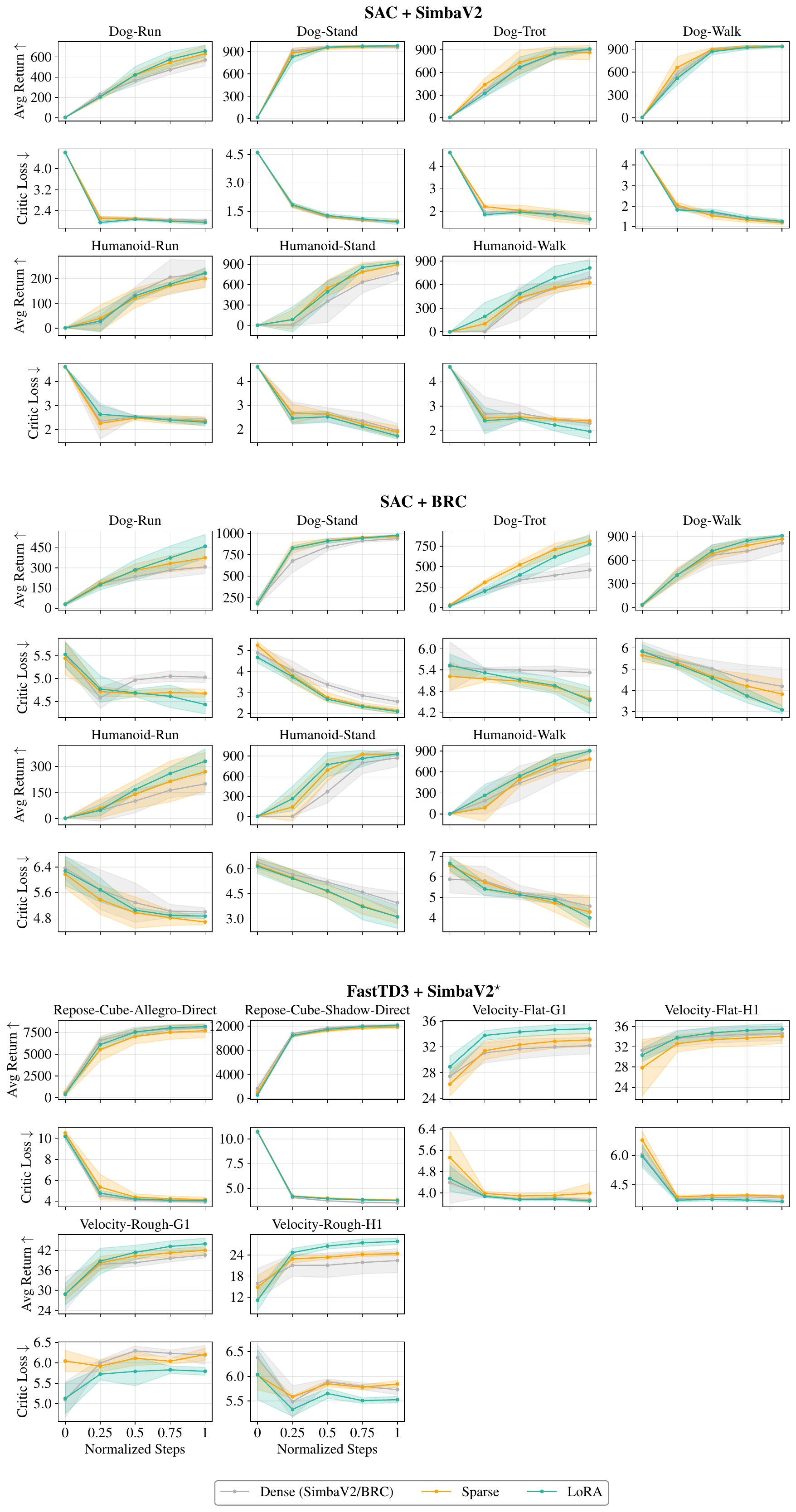}
    \caption{\textbf{Per-environment learning curves.}}
\label{fig:full_results}
\end{figure}

\paragraph{Computational cost analysis.}
During experiments, SimbaV2, Sparse, and LoRA exhibit comparable GPU VRAM usage and training time within the same algorithm. All methods share the same dense architecture, leading to nearly identical per-step computation. Training time is primarily dominated by environment interaction (CPU-based stepping for SAC on DMC; GPU-parallel rollouts for FastTD3 on IsaacLab) and large-batch updates of distributional critics~\cite{bellemare2017distributional}, rather than the number of trainable parameters. Similarly, VRAM consumption is largely driven by replay buffers and activation storage instead of the critic network parameters. Therefore, LoRA is not motivated by parameter or memory efficiency in RL, but by the regularization effect of constraining updates to a low-dimensional subspace, as demonstrated in our experiments. 

\section{Theoretical Proof}

\begin{lemma}[Existence of a positive scaling root under initialization]
\label{lem:positive_root_exists_app}
Consider the scaling equation in Eq.~\eqref{eq:s_closed_form_general} for row $j$, induced by the base weight $w_j$ and LoRA update $\delta_j$, with $\delta_j \neq 0$.
Under the initialization with $\|w_j\|_2^2 = \kappa$ with $\kappa \in (0,1)$, the corresponding quadratic equation in $s_j$ admits two real roots $s_j^{(0)}$ and $s_j^{(1)}$ satisfying
\[
s_j^{(0)} > 0,
\qquad
s_j^{(1)} < 0.
\]
In particular, there always exists a positive scaling root.
\end{lemma}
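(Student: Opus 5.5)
The plan is to work directly with the quadratic in Eq.~\eqref{eq:implicit_standard_form},
\[
q(s) = a s^2 + 2b s + c,\qquad a=\|\delta_j\|_2^2,\quad b=\langle w_j,\delta_j\rangle,\quad c=\|w_j\|_2^2-1=\kappa-1 ,
\]
and to read off the sign pattern of its roots from Vieta's formulas. The hypotheses give exactly what is needed. Since $\delta_j\neq 0$, we have $a>0$, so $q$ is a genuine quadratic opening upward. Since $\kappa\in(0,1)$, we have $c<0$.

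The first step is to show that the two roots are real and distinct. The discriminant (divided by four) is $b^2 - ac = \langle w_j,\delta_j\rangle^2 + \|\delta_j\|_2^2(1-\kappa)$. The first term is nonnegative and the second is strictly positive, so the discriminant is strictly positive. Hence the square root in Eq.~\eqref{eq:s_closed_form_general} is real, and the two roots $s_j^{(0)}$ (with the $+$ sign) and $s_j^{(1)}$ (with the $-$ sign) are distinct real numbers.

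The second step is to show that the roots have opposite signs. The product of the roots is $c/a = (\kappa-1)/\|\delta_j\|_2^2 < 0$, so one root is positive and the other negative. To identify which is which without invoking Vieta, I would note that $\sqrt{b^2-ac} > \sqrt{b^2} = |b|$ because $-ac>0$. Then $-b+\sqrt{b^2-ac}>0$ and $-b-\sqrt{b^2-ac}<0$, and dividing by $a>0$ gives $s_j^{(0)}>0>s_j^{(1)}$. An equivalent geometric check is $q(0)=c<0$ together with $q(s)\to+\infty$ as $s\to\pm\infty$, which by the intermediate value theorem places one root on each side of $0$.

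There is no real obstacle here; the key point is that $\kappa<1$ makes $c<0$. I would also remark that this is exactly why the paper initializes with $\|w_j\|_2<1$. If $\kappa\ge 1$, positivity of a root could fail depending on the sign of $\langle w_j,\delta_j\rangle$, and real roots might not exist at all. Finally, since the positive root corresponds to a solution of $\|w_j+s\delta_j\|_2^2=1$, it satisfies Eq.~\eqref{eq:implicit_s_constraint}, which justifies selecting it in the implementation.
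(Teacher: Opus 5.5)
Your proposal is correct and follows the paper's argument: both use $\|\delta_j\|_2^2>0$ and $\kappa-1<0$ to conclude via Vieta that the product of the roots, $(\kappa-1)/\|\delta_j\|_2^2$, is negative, so the roots have opposite signs. Your explicit check that the discriminant $\langle w_j,\delta_j\rangle^2+\|\delta_j\|_2^2(1-\kappa)$ is strictly positive is a useful addition, since the paper asserts the roots are real without stating this step.
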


\begin{proof}
For each row $j$, the scaling coefficient $s_j$ is obtained by solving the quadratic equation associated with Eq.~\eqref{eq:s_closed_form_general}, which can be written in the form
\begin{equation}
\label{eq:quadratic_root_proof}
\|\delta_j\|_2^2\, s_j^2 + 2\langle w_j,\delta_j\rangle s_j + (\|w_j\|_2^2 - 1) = 0.
\end{equation}
Under our initialization, $\delta_j \neq 0$, so the quadratic coefficient satisfies
\[
\|\delta_j\|_2^2 > 0.
\]
Moreover, each base weight row is initialized to have norm
\[
\|w_j\|_2^2 = \kappa, \qquad \kappa \in (0,1),
\]
and therefore the constant term is
\[
\|w_j\|_2^2 - 1 = \kappa - 1 < 0.
\]

Since the product of the two roots of Eq.~\eqref{eq:quadratic_root_proof} is
\[
s_j^{(0)} s_j^{(1)}
=
\frac{\|w_j\|_2^2 - 1}{\|\delta_j\|_2^2}
=
\frac{\kappa - 1}{\|\delta_j\|_2^2}
< 0,
\]
the two roots must have opposite signs. Hence one root is positive and the other is negative. Denoting the positive root by $s_j^{(0)}$ and the negative root by $s_j^{(1)}$, we obtain
\[
s_j^{(0)} > 0,
\qquad
s_j^{(1)} < 0.
\]
This proves that a positive scaling root always exists.
\end{proof}

%%%%%%%%%%%%%%%%%%%%%%%%%%%%%%%%%%%%%%%%%%%%%%%%%%%%%%%%%%%%

\end{document}